\theoremstyle{definition}  %
\newtheorem{lemma}{Lemma}
\newtheorem{assumption}{Assumption}
\theoremstyle{plain}
\newtheorem{theorem}{Theorem}
\newtheorem{definition}{Definition}
\numberwithin{observation}{section}
\numberwithin{claim}{section}
\numberwithin{fact}{section}
\numberwithin{lemma}{section}
\numberwithin{proposition}{section}
\numberwithin{theorem}{section}
\numberwithin{corollary}{section}
\numberwithin{definition}{section}
\xpatchcmd{\proof}{\itshape}{\normalfont\proofnameformat}{}{}
\newcommand{\proofnameformat}{\bfseries}
\newcommand{\neutralize}[1]{\expandafter\let\csname c@#1\endcsname\count@}
\newcommand{\savehyperref}[2]{\texorpdfstring{\hyperref[#1]{#2}}{#2}}
\newcommand{\declarecolor}[2]{\definecolor{#1}{RGB}{#2}\expandafter\newcommand\csname #1\endcsname[1]{\textcolor{#1}{##1}}}
\definecolor{mydarkblue}{rgb}{0,0.08,0.45}
\newcommand{\lnorm}{\left\lVert}
\newcommand{\rnorm}{\right\rVert}
\def\va{{\bm{a}}}
\def\vb{{\bm{b}}}
\def\vd{{\bm{d}}}
\def\ve{{\bm{e}}}
\def\vv{{\bm{v}}}
\def\vx{{\bm{x}}}
\def\vy{{\bm{y}}}
\def\vz{{\bm{z}}}
\DeclareMathAlphabet{\mathsfit}{\encodingdefault}{\sfdefault}{m}{sl}
\SetMathAlphabet{\mathsfit}{bold}{\encodingdefault}{\sfdefault}{bx}{n}
\newcommand{\calI}{\ensuremath{\mathcal{I}}}
\newcommand{\calS}{\ensuremath{\mathcal{S}}}
\newcommand{\calX}{\ensuremath{\mathcal{X}}}
\renewcommand{\bar}[1]{\overline{#1}}
\newcommand{\norm}[1]{\left\| #1 \right\|}
\newcommand{\vtheta}{\boldsymbol{\theta} }
\DeclareMathOperator{\dist}{dist}
\newtheorem*{theorem*}{Theorem}
\newtheorem*{corollary*}{Corollary}
\newcommand{\mat}[1]{\mathbf{#1}}
\newcommand{\defeq}{\coloneqq}
\newcommand{\CLS}{\mathsf{CLS}}
\newcommand{\x}{x}
\newcommand{\y}{y}
\newcommand{\z}{z}
\newcommand{\anc}{\x}
\newcommand{\pos}{\y}
\renewcommand{\neg}{\z}
\newenvironment{nproblem}[1][\unskip]{%
  \medskip
  \begin{mdframed}
  \noindent
  \textbf{\underline{$#1$ Problem.}} \\
  \noindent
}{%
  \end{mdframed}
  \medskip % Add this line to create vertical space after the environment
}
\newcommand{\betweeness}{\textsc{LocalBetweenness-1D}}
\newcommand{\betweenessHighdim}{\textsc{LocalBetweenness-HighDimension}}
\newcommand{\nbtw}{\textsc{LocalNonBetweenness-Euclidean}}
\newcommand{\Contrastive}{\textsc{LocalTripletLoss-Euclidean}}
\newcommand{\ContrastiveHighDim}{\textsc{LocalContrastive-HighDimension}}
\newcommand{\QuadraticKKT}{\textsc{QuadraticProgram-KKT}}
\newcommand{\Triplets}{\textsc{Local Triplets}}
\newcommand{\LOCALOPT}{\textsc{Localopt}}
\newcommand{\CONTLOCALOPT}{\textsc{Continuous-Localopt}}
\newcommand{\notelist}{}
\NewDocumentCommand{\addnote}{mm}{
  \expandafter\gdef\expandafter\notelist\expandafter{%
    \notelist
    \noindent \hyperlink{#2}{#1}\par
  }
  \hypertarget{#2}{#1}%
}
\NewDocumentCommand{\io}{m}{\addnote{{\color{olive}[* Ioannis: #1 *]}}{\detokenize{#1}}}
\NewDocumentCommand{\jy}{m}{\addnote{{\color{blue}[* Jingming: #1 *]}}{\detokenize{#1}}}
\NewDocumentCommand{\yl}{m}{\addnote{{\color{cyan}[* Yiyuan: #1 *]}}{\detokenize{#1}}}
\NewDocumentCommand{\vnote}{m}{\addnote{{\color{green}[* Vaggos: #1 *]}}{\detokenize{#1}}}
\theoremstyle{plain}
\title{The Complexity of Finding Local Optima in Contrastive Learning}
\author[1]{Jingming Yan\textsuperscript{*}}
\author[2]{Yiyuan Luo\textsuperscript{*}}
\author[2,3]{Vaggos Chatziafratis}
\author[1,3]{Ioannis Panageas}
\author[1]{\\ Parnian Shahkar}
\author[1]{Stelios Stavroulakis}
\affil[1]{University of California, Irvine}
\affil[2]{University of California, Santa Cruz}
\affil[3]{Archimedes AI}
\date{}
\begin{document}

\maketitle

\renewcommand{\thefootnote}{\fnsymbol{footnote}}
\footnotetext[1]{Equal contribution. Corresponding authors jingmy1@uci.edu, yluo124@ucsc.edu.}

\begin{abstract}
Contrastive learning is a powerful technique for discovering meaningful data representations by optimizing objectives based on \textit{contrastive information}, often given as a set of weighted triplets $\{(\anc_i, \pos_i^+, \neg_{i}^-)\}_{i = 1}^m$ indicating that an ``anchor'' $\anc_i$ is more similar to a ``positive'' example $y_i$ than to a ``negative'' example $z_i$. The goal is to find representations (e.g., embeddings in $\mathbb{R}^d$ or a tree metric) where anchors are placed closer to positive than to negative examples. While finding \textit{global} optima of contrastive objectives is $\mathsf{NP}$-hard, the complexity of finding \textit{local} optima---representations that do not improve by local search algorithms such as gradient-based methods---remains open. Our work settles the complexity of finding local optima in various contrastive learning problems by proving $\mathsf{PLS}$-hardness in discrete settings (e.g., maximize satisfied triplets) and $\mathsf{CLS}$-hardness in continuous settings (e.g., minimize Triplet Loss), where $\mathsf{PLS}$ (Polynomial Local Search) and $\mathsf{CLS}$ (Continuous Local Search) are well-studied complexity classes capturing local search dynamics in discrete and continuous optimization, respectively.  Our results imply that no polynomial time algorithm (local search or otherwise) can find a local optimum for various contrastive learning problems, unless  $\mathsf{PLS}\subseteq\mathsf{P}$ (or $\mathsf{CLS}\subseteq \mathsf{P}$ for continuous problems). Even in the unlikely scenario that $\mathsf{PLS}\subseteq\mathsf{P}$ (or  $\mathsf{CLS}\subseteq \mathsf{P}$), our reductions imply that there exist instances where local search algorithms need exponential time to reach a local optimum, even for $d=1$ (embeddings on a line).

\end{abstract}
\newpage
\section{Introduction}

%one sentence summary about local solutions, complexity, gap and in contrastive embeddings.

Extracting meaningful representations from complicated datasets is a cornerstone of machine learning. For the past decades, algorithmic questions of how to find convenient representations (Euclidean space, tree metrics, etc.) faithfully capturing distance relationships have been at the forefront of metric embeddings and multidimensional scaling~\citep{kruskal1964multidimensional,borg2007modern,indyk20178}, yielding by now a vast literature with both practical successes and deep mathematical insights.  

Due to the high cost of labeling datasets and obtaining accurate distances, many communities focus instead on learning representations based on easier-to-obtain \textit{contrastive information}, i.e., distance comparisons~\citep{agarwal2007generalized,tamuz2011adaptively,jamieson2011low,van2012stochastic,terada2014local,jain2016finite,kleindessner2017kernel}. Contrastive information, like ``item $x$ is closer to item $y$ than to $z$'' or ``among $x,y,z$, items $x,z$ are farthest apart'' %or ``$x,y$ are most relevant,'' 
is much easier to obtain than numerical values (``how similar is $x$ to $y$''), essentially creating pseudo-labels %as a form of supervision, as such triplet comparisons 
with little to no supervision. Indeed, such triplets are standard in contrastive learning, e.g., the popular ``anchor-positive-negative'' paradigm ($x,y^+,z^-$)~\citep{schroff2015facenet}, since image transformations (cropping, rotations) or nearby words produce anchor-positive pairs, and random images/words yield anchor-negative pairs (see also ``hard negatives''~\citep{robinson2020contrastive}).

\begin{figure}[ht]
    \centering
    \resizebox{\textwidth}{!}{
    \begin{tabular}{cccc}
        \toprule
        \multicolumn{2}{c}{\textbf{Contrastive Information}}
        & \textbf{Types of Representations}
        & \textbf{Types of Local Moves} \\
        \midrule
        \begin{tikzpicture}
            \coordinate (X) at (0, 0);
            \coordinate (Y) at (0.5, 0.5);
            \coordinate (Z) at (2, 0.3);

            \draw[dashed, gray] (X) -- (Y);
            \draw[dashed, gray] (X) -- (Z);

            \fill (X) circle (1.5pt) node[below left] {$x$};
            \fill (Y) circle (1.5pt) node[above right] {$y$};
            \fill (Z) circle (1.5pt) node[below right] {$z$};
        \end{tikzpicture}
        & \begin{tikzpicture}
            \coordinate (X) at (0, 0);
            \coordinate (Y) at (0.5, 0.5);
            \coordinate (Z) at (2, 0.3);

            \draw[dashed, gray] (X) -- (Y);
            \draw[dashed, gray] (X) -- (Z);
            \draw[dashed, gray] (Y) -- (Z);

            \fill (X) circle (1.5pt) node[below left] {$x$};
            \fill (Y) circle (1.5pt) node[above right] {$y$};
            \fill (Z) circle (1.5pt) node[below right] {$z$};
        \end{tikzpicture}
        & \begin{tikzpicture}
            \coordinate (a) at (-1, -0.4);
            \coordinate (b) at (-0.5, 0.5);
            \coordinate (c) at (-1.4, -0.3);
            \coordinate (d) at (-1.4, 1);
            \coordinate (e) at (0.2, 0.8);

            \fill (a) circle (1.5pt);
            \fill (b) circle (1.5pt);
            \fill (c) circle (1.5pt);
            \fill (d) circle (1.5pt);
            \fill (e) circle (1.5pt);
        \end{tikzpicture}
        & \begin{tikzpicture}
            \coordinate (a) at (-1, -0.4);
            \coordinate (a') at (0.4, -0.2);
            \coordinate (b) at (-0.5, 0.5);
            \coordinate (c) at (-1.4, -0.3);
            \coordinate (d) at (-1.4, 1);
            \coordinate (e) at (0.2, 0.8);

            \fill[red!20] (a) circle (1.5pt);
            \fill[red] (a') circle (1.5pt);
            \fill (b) circle (1.5pt);
            \fill (c) circle (1.5pt);
            \fill (d) circle (1.5pt);
            \fill (e) circle (1.5pt);

            \draw[->] (-0.9, -0.4) -- (0.3, -0.2);
        \end{tikzpicture} \\

        ``$x$ is closer to $y$ than $z$''
        & ``$x, z$ are farthest apart''
        & $\mathbb{R}^d$ embedding
        & move a point in $\mathbb{R}^d$ \\
        \addlinespace

        \multicolumn{2}{c}{
            \begin{tikzpicture}
                \coordinate (root) at (0, 0);
                \coordinate (lca) at (-0.5, -0.5);
                \coordinate (z) at (0.5, -0.5);
                \coordinate (x) at (-0.9, -1);
                \coordinate (y) at (-0.1, -1);

                \draw (root) -- (lca);
                \draw (root) -- (z);
                \draw (lca) -- (x);
                \draw (lca) -- (y);

                \draw[->] (-0.9, -0.4) node[left] {\small LCA of $x,y$} -- (-0.6, -0.5);
                \draw[->] (0.4, 0.1) node[right] {\small LCA of $x,y,z$} -- (0.1, 0);

                \fill (root) circle (1.5pt);
                \fill (lca) circle (1.5pt);
                \fill (z) circle (1.5pt) node[below] {$z$};
                \fill (x) circle (1.5pt) node[below] {$x$};
                \fill (y) circle (1.5pt) node[below] {$y$};
            \end{tikzpicture}
        }
        & \begin{tikzpicture}
            \coordinate (root) at (0, 0);
            \coordinate (lca1) at (-0.5, -0.5);
            \coordinate (lca2) at (0.5, -0.5);
            \coordinate (lca3) at (-0.8, -1);
            \coordinate (x) at (-0.2, -1);
            \coordinate (y) at (0.2, -1);
            \coordinate (z) at (0.8, -1);
            \coordinate (w) at (-1.05, -1.5);
            \coordinate (v) at (-0.55, -1.5);

            \draw (root) -- (lca1);
            \draw (root) -- (lca2);
            \draw (lca1) -- (lca3);
            \draw (lca1) -- (x);
            \draw (lca2) -- (y);
            \draw (lca2) -- (z);
            \draw (lca3) -- (w);
            \draw (lca3) -- (v);

            \foreach \point in {root, lca1, lca2, lca3, x, y, z, w, v} {
                \fill (\point) circle (1.5pt);
            }
        \end{tikzpicture}
        & \begin{tikzpicture}
            \coordinate (root) at (0, 0);
            \coordinate (lca1) at (-0.5, -0.5);
            \coordinate (lca2) at (0.7, -0.5);
            \coordinate (lca3) at (-0.8, -1);
            \coordinate (lca3') at (0.35, -0.25);
            \coordinate (x) at (-0.2, -1);
            \coordinate (y) at (0.4, -1);
            \coordinate (z) at (1.0, -1);
            \coordinate (w) at (-1.05, -1.5);
            \coordinate (v) at (-0.55, -1.5);
            \coordinate (v') at (0.1, -0.6);

            \draw (root) -- (lca1);
            \draw (root) -- (lca2);
            \draw (lca1) -- (lca3);
            \draw (lca1) -- (x);
            \draw (lca2) -- (y);
            \draw (lca2) -- (z);
            \draw (lca3) -- (w);
            \draw[gray!50] (lca3) -- (v);
            \draw (lca3') -- (v');

            \foreach \point in {root, lca1, lca2, lca3', x, y, z, w} {
                \fill (\point) circle (1.5pt);
            }

            \fill[gray!50] (lca3) circle (1.5pt);
            \fill[red!20] (v) circle (1.5pt);
            \fill[red] (v') circle (1.5pt);

            \draw[->] (-0.45,-1.5) .. controls (-0.3,-1.5) and (0, -1.4) .. (0.1,-0.7);
        \end{tikzpicture} \\

        \multicolumn{2}{c}{``$x$ is closer to $y$ than $z$''}
        & rooted binary tree
        & relocate a leaf in the tree \\
        \bottomrule
    \end{tabular}
    }
    \caption{An overview of contrastive learning problems studied here (LCA: lowest common ancestor).}
    \label{fig:all-problems}
\end{figure}
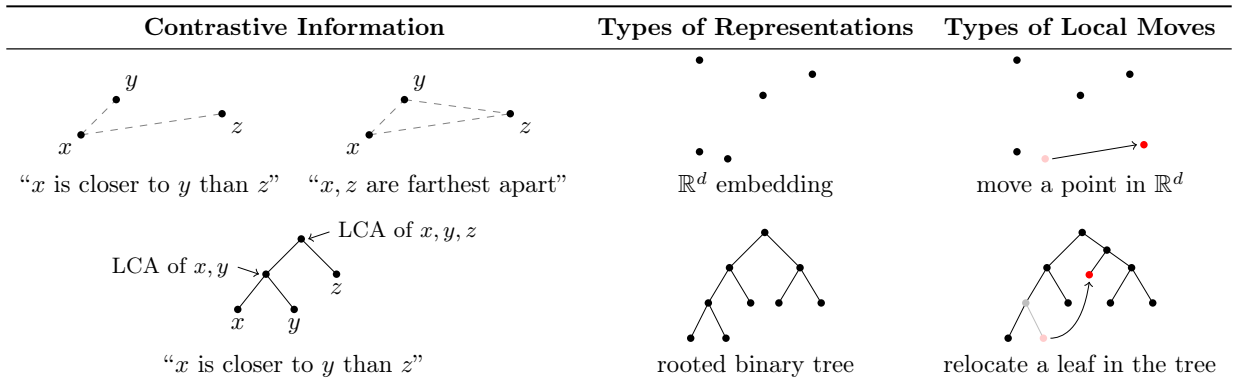

Interestingly, even though contrastive information lies at the heart of contrastive learning pipelines, the history of using ordinal information, i.e., comparisons instead of absolute numerical values, dates back to the 1960s in psychometrics and (non-metric) multi-dimensional scaling %because humans tend to be accurate when comparing alternatives, yet terrible when numerically estimating similarities~
\citep{torgerson1952multidimensional,thurstone1954measurement,kruskal1964nonmetric}. %, especially when the goal is to measure human sentiments about political figures, or ranking alternatives.  
Since then, \textit{ordinal} embeddings (also monotone or contrastive embeddings~\citep{bilu2005monotone,chen2022learning,chatziafratis2024dimension}) are widely-studied in computer science, because important applications (nearest-neighbor, recommendation, ranking, crowdsourcing) only care to preserve the relative ordering of distances (not  lengths)~\citep{agarwal2007generalized,buadoiu2008ordinal,tamuz2011adaptively,vankadara2019insights,ghosh2019landmark}.

Despite empirical successes, various aspects of contrastive learning are not well-understood. To address this, works on theoretical foundations have focused on generalization~\citep{alon2023optimal}, inductive bias~\citep{saunshi2022understanding,haochen2022theoretical}, latent classes~\citep{saunshi2019theoretical}, hard negatives~\citep{robinson2020contrastive,kalantidis2020hard,awasthi2022more}, multi-view redundancy~\citep{tosh2021contrastive}, mutual information~\citep{oord2018representation,hjelm2018learning} and more. 

\vspace{-0.2cm}
\paragraph{Optimization of contrastive objectives.} Our paper focuses on \textit{optimization} aspects of contrastive learning with widely-used objectives, both in discrete and continuous settings (\Cref{fig:all-problems}). The input is a set $\mathcal{C}$ of $m$ triplets $\{(\anc_i, \pos_i^+, \neg_{i}^-)\}_{i = 1}^m$ on a dataset $\mathcal{S}$, with non-negative weights $w_i \ge 0$ indicating the importance of each constraint (see~\citep{robinson2020contrastive,kalantidis2020hard} for benefits of ``hard  negatives'' which are difficult to distinguish from anchor points). The task is to find a representation $f(\cdot)$ respecting the given constraints as much as possible by optimizing Objectives~\ref{obj1},~\ref{obj2},~\ref{obj3} below:%, in other words $f(\cdot)$ should satisfy as many (weighted) triplets as possible, or minimize loss from violated triplets.% (this depends on $f(\cdot)$ and each problem’s semantics). 

\begin{enumerate}[leftmargin=*]
    \item\label{obj1} Triplet Maximization in $\mathbb{R}^d$: Many algorithmic works in contrastive embeddings, aim at maximizing the weight of \textit{satisfied} triplets. We say that a triplet $(\anc_i, \pos_i^+, \neg_{i}^-)$ is satisfied by the embedding $f(\cdot)$, if $\lnorm f(x_i)-f(y_i)\rnorm_2 \le \lnorm f(x_i)-f(z_i)\rnorm_2$. Even the case of $d=1$ (line embedding or ranking) is well-motivated and non-trivial~\citep{arora1995polynomial,arora2002new,GHMRC11,fan2020learning}.

%The hope of course is that after training is completed and gradient descent has converged, the neural network has learned a representation $f(\cdot)$ that can be later used in many downstream tasks. Currently, there is a variety of contrastive loss objectives set up in a way that incentivize pushing positive pairs $(x_i,y_i)$ close together in embedding space, while keeping negative pairs $(x_i,z_i)$ far apart, thus steering neural networks towards representations that largely respect the given triplets. 

    \item \label{obj2} Triplet Maximization on trees: %A mapping of data to the leaves of a tree $T$ is called a hierarchical clustering. 
    Here, we map data onto leaves of a tree $T$ maximizing the weight of satisfied triplets %A triplet $(\anc_i, \pos_i^+, \neg_{i}^-)$ is satisfied if 
    ($\dist_T(x_i,y_i) \le  \dist_T(x_i,z_i)$, or equivalently, $T$ has a subtree containing $\anc_i, y_i$, but not $z_i$). Such hierarchical clustering problems known as triplet reconstruction or consistency naturally appear across areas~\citep{aho1981inferring,byrka2010new,vikram2016interactive,bodirsky2017complexity,chatziafratis2018hierarchical,emamjomeh2018adaptive}.  

    \item \label{obj3} Minimize Triplet Loss in $\mathbb{R}^d$: %\textcolor{red}{we need a convex compact domain, e.g. $[0, 1]^d$} \vnote{we can say it that we restrict to the box in the technical section, but for now R to the d is fine for intro I believe. we can discuss it tomorrow}: 
    The influential FaceNet paper~\citep{schroff2015facenet} introduced the Triplet Loss, which has since evolved into one of the most prominent contrastive losses:
    \begin{align*}
    \mathcal{L}(\mathcal{C},f(\cdot)) \coloneqq \sum_{i=1}^m w_i \mathcal{L}_i, \text{ where } 
    \mathcal{L}_i \coloneqq \max \{\lnorm f(x_i)-f(y_i)\rnorm_2^2 - \lnorm f(x_i)-f(z_i)\rnorm_2^2 + \alpha, 0\}.
    \end{align*}
    %\begin{align*}
    %\mathcal{L}(\{(\anc_i, \pos_i^+, \neg_{i}^-)\}_{i = 1}^m) &\coloneqq \sum_{i=1}^m w_i \mathcal{L}_i, \text{ where } \\
    %\mathcal{L}_i &\coloneqq \max \{\lnorm f(x_i)-f(y_i)\rnorm_2^2 - \lnorm f(x_i)-f(z_i)\rnorm_2^2 + \alpha, 0\}.
    %\end{align*}
    The margin  $\alpha$ specifies the minimum gap of distance $(x_i,y_i)$ and $(x_i,z_i)$ and $w_i$ is the importance of a triplet. This loss ``pushes'' positive pairs close together while keeping negative pairs far apart.
\end{enumerate}

Our primary motivation is to understand the complexity of finding representations based on contrastive information. Unfortunately, $\mathsf{NP}$-hardness results prevent us from finding \textit{globally} optimum representations in the worst-case, and so in practice, local search algorithms are deployed. Local search is a general algorithmic approach that examines improving moves from a set of allowable nearby configurations to the current solution (most notably gradient-based methods and various heuristics in discrete optimization~\citep{orlin2004approximate}). We ask the following basic questions: 
\begin{center}
%\textit{How easy (or hard) is it to find a \emph{locally optimum} representation instead? Will gradient descent or other local search methods converge in polynomial time, or is exponential time sometimes necessary?}
\textbf{Motivating Questions: }\textit{How hard is it to find a \emph{locally optimum} representation for contrastive learning? Can we find a local optimum in polynomial time? How efficient is local search?}
\end{center}

In this context, a locally optimum representation is one that cannot be improved by taking further steps of gradient-based methods, or generally \textit{any} type of local search algorithm.

\subsection{Our contributions}

Our main contribution is to formally study the above questions from the lens of computational complexity, and give strong evidence that for many problems with contrastive objectives, even finding a \textit{locally} optimum solution for the objectives is intractable in a specific formal sense. Local search is a widely-used heuristic for many $\mathsf{NP}$-hard problems~\citep{lawler1985traveling,schaffer1991simple,ahuja2002survey,orlin2004approximate}, wherein we iteratively perform local moves (e.g., gradient step or reassignment of points) that get better objective value, terminating at a solution with no improving move, i.e., a \textit{local} optimum.  Of course, a local optimum need not be a global optimum, and since there are more local optima than global optima in general, local optima  may appear  easier to find.  Nevertheless, there is currently an overall lack of theoretical guarantees regarding local optimization of the contrastive objectives~\ref{obj1},~\ref{obj2},~\ref{obj3}; indeed, it is not even clear whether there is a polynomial-time algorithm for computing a local optimum. Our main result is to show that no such polynomial time algorithm exists (unless there is an unlikely collapse of complexity classes, see below). Moreover, an unconditional consequence of our reductions, is that there exist contrastive learning instances where local search algorithms (including gradient-based methods) require \textit{exponential} time before reaching a local optimum. To the best of our knowledge, we are the first to formally investigate the complexity of local solutions in contrastive learning objectives.%, and establish formal connections with the complexity of local search.

%Could it be that no efficient algorithm for the latter problem exists? If so, how can we prove it? If we can’t prove it, how can we nevertheless amass evidence of intractability? These questions are, of course, addressed by the theory of NP-completeness. This lecture and the next describe analogs of NP-completeness for equilibrium computation problems.

%We are not the first to study locally optimal solutions in complexity. Indeed, complexity theory focused on this. NP-hardness gave us a beautiful theory for hard to solve problems in terms of computing globally optimal solutions. The analog here is PLS for discrete problems MAX CUT, circuit FLIP etc.  CHALLENGE: pls reductions preserving local optima

\vspace{-0.2cm}
\paragraph{Proving hardness of local optima.} The remarkable theory of $\NP$-hardness was developed to understand the difficulty of computing the value of a \textit{global} optimum in optimization problems. Hence, $\NP$-hardness is not suitable to describe local optimality, which corresponds to fixed points of local search algorithms. \citet*{johnson1988easy} initiated the complexity-theoretic study of discrete local search problems by defining the complexity class $\mathsf{PLS}$ ($\mathsf{P}$olynomial $\mathsf{L}$ocal $\mathsf{S}$earch), and later,~\citet{daskalakis2011continuous} defined the class $\mathsf{CLS}$ ($\mathsf{C}$ontinuous $\mathsf{L}$ocal $\mathsf{S}$earch) for continuous local search problems. Due to the significance of computing fixed points, both $\mathsf{PLS}$ and $\mathsf{CLS}$ have played a prominent role in optimization and algorithmic game theory. For example, computing a pure Nash equilibrium in congestion games is $\PLS$-hard~\citep{fabrikant2004complexity}, and it was recently shown that computing a Karush-Kuhn-Tucker point (fixed points of gradient-descent) of a quadratic program is $\CLS$-hard~\citep{fearnley2024complexity}. Surprisingly, even though $\mathsf{PLS}$ and $\mathsf{CLS}$ were originally defined with very different problems in mind, a recent breakthrough~\citep*{Fearnley23:Complexity} established a deep connection between them,\footnote{$\mathsf{PPAD}$ captures computation of mixed Nash equilibria, see celebrated work of~\cite{daskalakis2009complexity}.} i.e., that $\mathsf{CLS}=\mathsf{PLS}\cap\mathsf{PPAD}$. Building on it, \citet{Babichenko21:Settling,Anagnostides23:Algorithms,
ghosh2024complexitysymmetricbimatrixgames, anagnostides2025complexitysymmetricequilibriaminmax} have shown  that broad classes of games, including congestion games and adversarial team games, are captured by $\CLS$. Beyond games, for connections of $\mathsf{PLS}$ and $\mathsf{CLS}$ to cryptography, see~\citet{bitansky2020cryptographic,hubacek2020hardness}. Our work provides formal evidence of computational intractability of finding local optima in contrastive learning:

\begin{theorem*}[Abridged; see Theorems~\ref{thm:local-contrastive-dd},~\ref{thm:local-betweenness-dd},~\ref{thm:local-contrastive-tree},~\ref{thm:CLS_hard_contrastive}]
Triplet maximization problems~\ref{obj1},~\ref{obj2} are $\PLS$-hard, i.e., every problem in $\PLS$ efficiently reduces to them. The Triplet Loss minimization problem~\ref{obj3} is $\CLS$-hard, i.e., every problem in $\CLS$ efficiently reduces to it. As a corollary, assuming the widely-believed $\mathsf{PLS}\nsubseteq\mathsf{P}$ and $\mathsf{CLS}\nsubseteq \mathsf{P}$, no polynomial time algorithm (local search or otherwise) can find a \textit{local optimum} of  objectives \ref{obj1}, \ref{obj2}, \ref{obj3}. Even if $\mathsf{PLS}\subseteq\mathsf{P}$ or $\mathsf{CLS}\subseteq \mathsf{P}$, there exist instances where local search (including gradient-methods) require exponential time to reach a local optimum. 
\end{theorem*}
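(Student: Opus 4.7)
The plan is to reduce from canonical hard problems in each class: from \textsc{Local Max-Cut} (\PLS-complete under the single-vertex-flip neighborhood) for the triplet maximization objectives, and from $\QuadraticKKT$ (\CLS-complete by~\citet{fearnley2024complexity}) for Triplet Loss minimization. Each reduction should be made \emph{tight} so that the unconditional exponential lower bounds known for standard-pivot local search on the source problems transfer verbatim to the contrastive instances, giving the last clause of the theorem for free.

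For Objective~\ref{obj1} in $d=1$, I would represent each vertex $v$ of the Max-Cut graph by one ``variable-point'' $p_v$ on the line and use heavy \emph{anchor triplets} of large polynomial weight to (i)~pin down two widely separated positions $L$ and $R$ and (ii)~force every $p_v$ to sit at one of $\{L,R\}$ at any local optimum. Each edge $(u,v)$ with weight $w_{uv}$ is realized by a constant-sized bundle of weighted \emph{edge triplets} whose total satisfied weight equals $w_{uv}$ precisely when $p_u$ and $p_v$ sit at opposite anchors. Since the anchor weights dominate the edge weights, no local move that pulls a point off the two anchor positions can ever be improving, so local optima of the contrastive instance are in bijection with locally optimal cuts; this yields \PLS-hardness in $d=1$, and the same construction embedded along the first coordinate handles any $d\ge 1$. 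For Objective~\ref{obj2}, the two line anchors are replaced by two ``bucket'' subtrees hanging from the children of the root: anchor triplets fix the tree backbone, each vertex is a leaf that can be relocated between the two buckets, and edge triplets encode cut contributions via the lowest-common-ancestor condition that defines triplet satisfaction on tree metrics.

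For Objective~\ref{obj3}, I would reduce from $\QuadraticKKT$. Given a QP $\min \tfrac{1}{2}\vx^\top \mat{Q}\vx + \vc^\top\vx$ over $[0,1]^n$, I construct a Triplet Loss instance with $n$ free scalar coordinates $p_1,\ldots,p_n$ encoding the QP variables $x_1,\ldots,x_n$, while all other data points are pinned at prescribed positions via heavy triplets. The key observation is that each summand $\mathcal{L}_i$ is piecewise quadratic in the free coordinates: on the active side of its kink it contributes $\|f(x_i)-f(y_i)\|_2^2 - \|f(x_i)-f(z_i)\|_2^2 + \alpha$, which, after anchoring, reduces to a quadratic form in the $p_i$. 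By choosing triplet structures, weights, and margins carefully I can synthesize (i) arbitrary cross-terms $q_{ij}p_ip_j$ to build $\mat{Q}$, (ii) linear terms $c_ip_i$, and (iii) soft box constraints via one-sided hinge triplets that activate only outside $[0,1]$. Any stationary point of the resulting loss then satisfies the KKT conditions of the encoded QP.

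The main obstacle is the non-smoothness at the kink of the $\max\{\cdot,0\}$ operator in the \CLS reduction: stationarity is defined through gradients, and at a kink the subdifferential is set-valued, so a naive piecewise argument can fail to push the KKT conditions back to the QP. To handle this I would attach auxiliary margin triplets engineered so that in the relevant feasible region every ``logical'' triplet is either strictly active or strictly inactive at any candidate local optimum, making the loss effectively smooth on the domain used by the reduction. A secondary technical difficulty, common to all three reductions, is enforcing the tight correspondence between neighborhoods: a single local move in the contrastive instance must simulate at most one source-side move (a vertex flip or a single KKT-update direction). I would enforce this by scaling gadget weights in decreasing orders of magnitude so that no multi-coordinate simultaneous change is improving unless a corresponding single-coordinate change already is, which is exactly what is needed to carry the known exponential lower bounds for standard-pivot local search through the reduction.
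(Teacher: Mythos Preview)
Your overall strategy---reduce from \textsc{LocalMaxCut} for the $\PLS$ results and from \textsc{QuadraticProgram-KKT} for the $\CLS$ result---matches the paper's, and your tree gadget is essentially the paper's. For Objective~\ref{obj3}, however, the paper avoids the kink issue you flag as the ``main obstacle'' with a single observation rather than an auxiliary-gadget workaround: since the embedding domain is $[0,1]$ by definition (so no soft box constraints are needed either), choosing margin $\alpha=1$ makes $(x-y)^2-(x-z)^2+\alpha\ge 0$ for all $x,y,z\in[0,1]$, hence the hinge is globally active and the total loss is a genuine quadratic polynomial in the free coordinates. The paper then writes down twelve concrete triplets on $\{x,y,z,0,\tfrac12\}$ together with their ``duals'' (same triple with positive/negative swapped), and solves a linear system in the weight differences $w_i-w_i'$ to realize any three-variable quadratic; patching these blocks together encodes the full QP with no smoothness worries. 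This is considerably simpler than the route you sketch.

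Your claim that the $d=1$ construction ``embedded along the first coordinate handles any $d\ge 1$'' is a genuine gap. In $\mathbb{R}^d$ the heavy boundary constraints that on the line confine each $p_v$ to two segments only confine it to a spherical shell around the anchor, so vertices can spread out angularly and the edge-triplet/cut correspondence breaks (for instance, two vertices at the same radius but $>60^\circ$ apart satisfy $(u,X^+,v^-)$ regardless of any cut). Local optima of the lifted instance therefore need not map back to local max-cuts. The paper handles general $d$ with a different gadget: $d$ special points $X_1,\dots,X_d$ are forced, via equilateral constraints of hierarchically decreasing weight, into a regular $(d{-}1)$-simplex, and pairwise isosceles constraints $(X_k,X_\ell,v)$ then force every $v$ onto the line through the simplex centroid perpendicular to its affine hull, recovering an effectively one-dimensional picture in which the cut argument goes through. (A smaller point: the problem requires embeddings to non-overlapping points, so literally pinning all $p_v$ to two positions $\{L,R\}$ is not permitted; the paper's constructions place vertices on segments or rays instead.)
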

   
\vspace{-0.1cm} 
Formal statements are in Section~\ref{sec:hardness-pls},~\ref{sec:hardness-cls}. %, after defining the input/output of the local versions for the contrastive objectives \ref{obj1}, \ref{obj2}, \ref{obj3}. 
As is common in complexity, polynomial time refers to algorithms whose runtime is polynomial in the input size (provided in binary representation). Our results also extend to \textit{approximate} local optima (solutions within a small $\epsilon>0$ from a local optimum).

\vspace{-0.1cm}
\paragraph{Challenges and proof ideas.} The consequence of proving that a problem is $\mathsf{PLS}$-hard (or $\mathsf{CLS}$-hard) is that in a certain precise sense, such problems are as hard as any other local search problem where the goal is to find a local optimum. Here, local optimality is defined  with respect to a generic definition of a local search algorithm~\citep{johnson1988easy,daskalakis2011continuous}. The main technical challenge in all works establishing hardness results is how to do $\mathsf{PLS}$-reductions (or $\mathsf{CLS}$-reductions). Roughly speaking, these are special type of efficient transformations between problems, that preserve \textit{local} optimality: the intuition is that starting from an already-known hard problem, and using a $\mathsf{PLS}$-reduction to another problem, e.g., the contrastive objectives above, then any efficient algorithm that purportedly finds a local optimum of the latter, would in fact compute a local optimum of the original (known-to-be-hard) problem. Contrast this with the common $\mathsf{NP}$-reductions that are only required to preserve the value of global optima, ignoring how local solutions are being altered. 

Our results provide $\mathsf{PLS}$-reductions from the \textsc{LocalMaxCut} problem~\citep{schaffer1991simple} (see~\Cref{sec:prelim} for definitions) to the maximization of satisfied triplets (Obj.~\ref{obj1},~\ref{obj2}), and a $\mathsf{CLS}$-reduction from \textsc{QuadraticProgram-KKT}~\citep{fearnley2024complexity} to the Triplet Loss (Obj.~\ref{obj3}). Our $\PLS$-reductions rely on novel gadgets that allow us to encode graph cuts and local (vertex) moves via triplet constraints and embeddings in Euclidean space, trees or even the line (for the case of rankings).  Our gadgets impose a series of ``heavy'' contrastive triplet constraints on a special set of ``boundary'' points, thus ensuring they are not allowed to move back and forth, otherwise the objective value would drift away from any local optimum. After an embedding is performed, the two sides of the graph cut can be formed by looking at which nodes were placed to the ``left'' or ``right'' of the boundary points. Regarding our $\CLS$-reductions, we encode and recover KKT stationary points of quadratic programs $\min_{\vx \in [0, 1]^n} \vx^\top \mat{Q} \vx + \vb^\top \vx$, by finding local minima of Triplet Loss, even if the representation $f(\cdot)$ comes from a simple linear embedding model parameterized by $\vtheta \in [0, 1]^n$ such that $f_{\vtheta}(\vx) = \vtheta^\top \vx$. Towards this, we first break the quadratic form into triples of variables $x, y, z \in [0, 1]$ and we generate a specially crafted collection $\mathcal{T}$ of contrastive triplets $(x_i,x_j^+,x_k^-)$, where $x_i,x_j,x_k$ are coordinates of $\vx$. However, contrastive constraints introduce dependencies among their shared variables and to deal with the interacting terms, we introduce groups of contrastive triplets with carefully chosen weights that depend on the coefficients of the quadratic form.

\section{Preliminaries}\label{sec:prelim}
%Here, we provide the necessary background on $\PLS, \CLS$ to understand our complexity results, and we define the contrastive learning objectives (discrete and continuous) for which we prove  hardness.

\subsection{Discrete objectives in contrastive learning and \texorpdfstring{$\PLS$}{PLS}}

\citet{johnson1988easy} defined $\PLS$ to describe local search problems. A problem $\Pi$ is in $\PLS$ if the following three polynomial-time algorithms exist:~\footnote{Both $\PLS$ and $\CLS$ can be equivalently defined with arithmetic circuits~\citep{Fearnley23:Complexity}} (i) the first algorithm, given an instance of $\Pi$, it outputs an arbitrary feasible solution $S$, (ii) the second, given $S$, it returns a number which is the objective value of the feasible solution, and (iii) the third, given $S$, it either reports ``locally optimal'' or produces a better solution. Implicit in the definition is the fact that feasible solutions have polynomially many \textit{neighboring} solutions (the third algorithm is polynomial-time). In this sense, one can think of $\PLS$ as problems with efficient verification of \textit{local} optimality.

\begin{definition}[$\PLS$-reduction]
A $\PLS$-reduction from problem $\Pi_1$ to problem $\Pi_2$ is two polynomial-time algorithms: (i) the first algorithm $A$ maps every instance $x$ of $\Pi_1$ to an instance $A(x)$ of $\Pi_2$, and (ii) the second algorithm $B$ maps every local optimum of $A(x)$ to a local optimum of $x$.
\end{definition}
A $\PLS$-reduction ensures that if we find a local optimum for $\Pi_2$ in polynomial time then, we could also find a local optimum for $\Pi_1$ in polynomial time. A problem is $\PLS$-hard if every problem in $\PLS$ can be reduced to it. In complexity, it is widely-believed that $\PLS\nsubseteq \P$, hence there is no polynomial-time algorithm for computing a local optimum of a $\PLS$-hard problem. Interestingly,~\citet{schaffer1991simple} proved many natural problems are $\PLS$-hard, including \textsc{LocalMaxCut}:

\begin{nproblem}[\textsc{LocalMaxCut}]\label{def:maxcut}
 \textsc{Input :} A weighted undirected graph $G(V,E)$ with a non-negative weight $w_{e}\ge0$ for each edge.
\\
  \noindent \textsc{Output :}  A partition $(S,\bar{S})$ of the vertices $V$ in two nonempty sets, such that no vertex $v$ can increase the value of the cut, i.e., the sum of weighted edges cut, by switching sides.
\end{nproblem}

We now define widely-used (discrete) objectives of maximizing satisfied contrastive constraints. To simplify notation, from now on we drop the signs and simply write $(\anc_i, \pos_i, \neg_{i})$ for contrastive triplets. All embeddings here  map a set of $n$ items to non-overlapping points of a metric space.%In all cases the input is a set $V$ of $n$ vertices, together with a set of $m$ contrastive triplets $\{(\anc_i, \pos_i^+, \neg_{i}^-)\}_{i = 1}^m$ where $x_i,y_i,z_i\in V$. Each triplet has a non-negative weight $w_i \ge 0$.

\begin{nproblem}[\textsc{LocalContrastive-Euclidean}] \label{def:local-euclidean}
\textsc{Input :} Set $V$ of $n$ vertices, together with $m$ contrastive triplets $\{(\anc_i, \pos_i, \neg_{i})\}_{i = 1}^m$ where $x_i,y_i,z_i\in V$ (we dropped the $+/-$ signs for lighter notation). Each triplet has a non-negative weight $w_i \ge 0$.
\\
  \noindent \textsc{Output :}  An embedding $f: V \to \mathbb{R}^d$ such that no vertex $v$ can increase the value of the embedding by switching its location in $\mathbb{R}^d$. We say a constraint $(x_i, y_i, z_i)$ is \emph{satisfied} by $f(\cdot)$, if $x_i$ is placed closer to $y_i$ than to $z_i$, i.e.,  $\|f(x_i) - f(y_i)\|_2 \leq \|f(x_i) - f(z_i)\|_2$. The embedding's objective value is $\sum_{i=1}^m w_i \cdot \mathbf{1}_{(x_i, y_i, z_i)}$, where $\mathbf{1}_{(x_i, y_i, z_i)} = 1$ if the constraint is satisfied by $f(\cdot)$, and 0 otherwise.
\end{nproblem}

%In the following two problems, the input is exactly the same, but the type of embedding and consequently what it means to satisfy a triplet $(\anc_i, \pos_i, \neg_{i})$ may be changed accordingly:

\begin{nproblem}[\textsc{LocalContrastive-Tree}] \label{def:local-tree}
\textsc{Input :}  As above, set $V$  with contrastive triplets $\{(\anc_i, \pos_i, \neg_{i})\}_{i = 1}^m$ (non-negative weights $w_i \ge 0$).
\\
  \noindent \textsc{Output :}  A hierarchical clustering, i.e., a binary rooted tree $T$ with $|V|$ leaves, and a 1-to-1 mapping from $V$ to the leaves of $T$, such that no vertex $v$ can increase the value of the tree by switching to another location in the tree $T$ (for each $v$ in $T$, there are exactly $2|V|-3$ other candidate locations). We say a triplet $(x_i, y_i, z_i)$ is \emph{satisfied} by $T$, if $x_i$ is placed closer to $y_i$ than to $z_i$, i.e., if there is a subtree in $T$ containing $x_i,y_i$ but not $z_i$ (this is equivalent to $\dist_T(x_i,y_i)\le\dist_T(x_i,z_i)$ for an ultrametric distance on $T$). The tree's objective value is $\sum_{i=1}^m w_i \cdot \mathbf{1}_{(x_i, y_i, z_i)}$.
\end{nproblem}

Moreover, we examine scenarios where provided contrastive information is of the form ``among $x,y,z$, items $x,z$ are \textit{farthest} apart'' (instead of indicating that ``item $x$ is closer to $y$ than to $z$''). Already for 1-dimensional embeddings, such constraints give rise to \textsc{Betweenness}, a well-studied ranking problem in approximation algorithms~\citep{arora2002new,charikar2009every,austrin2015np}.

\begin{nproblem}[\textsc{LocalBetweenness-Euclidean}] \label{def:local-ranking}
\textsc{Input :} Set $V$  with \textit{betweenness} triplets $\{(\anc_i, \pos_i, \neg_{i})\}_{i = 1}^m$ each with a non-negative weight $w_i \ge 0$.
\\
  \noindent \textsc{Output :}  An embedding $f: V \to \mathbb{R}^d$ such that no vertex $v$ can increase the value of the embedding by switching its location in $\mathbb{R}^d$. We say a constraint $(x_i, y_i, z_i)$ is \emph{satisfied} by $f(\cdot)$, if $x_i$ and $z_i$ are placed the farthest apart (equivalently, $y_i$ is ``between'' $x_i$ and $z_i$), i.e., $\|f(x_i) - f(z_i)\|_2 \ge \max\{\|f(x_i) - f(y_i)\|_2, \|f(z_i) - f(y_i)\|_2\}$. The embedding's objective value is $\sum_{i=1}^m w_i \cdot \mathbf{1}_{(x_i, y_i, z_i)}$.
\end{nproblem}

Even though we defined problems in their general form, our $\PLS$-hardness results also hold for interesting special cases: we show \textsc{LocalContrastive-Euclidean} in dimension $d=1$~\citep{buadoiu2008ordinal,alon2008ordinal,fan2020learning} and \textsc{LocalBetweenness-Euclidean} with $d=1$~\citep{arora2002new,charikar2009every,austrin2015np} are $\PLS$-hard. For trees, \textsc{LocalContrastive-Tree} is also known as triplet reconstruction/consistency~\citep{byrka2010new,chatziafratis2023triplet}. For other extensions, see~\Cref{app:extensions}.

\subsection{Continuous objectives in contrastive learning and \texorpdfstring{$\CLS$}{CLS}}
\citet{daskalakis2011continuous} proposed $\CLS$ to study local search for continuous objective functions, most notably search problems that can be solved by performing Gradient Descent. $\CLS$ has played an important role in game theory and optimization, and a recent breakthrough showed that $\CLS = \PPAD \cap \PLS$ \citep{Fearnley23:Complexity}. The natural problem of finding KKT points in quadratic programs was recently shown to be $\CLS$-hard~\citep{fearnley2024complexity} and we will reduce it to finding local minima of the Triplet Loss~\citep{schroff2015facenet}.

%$\CLS$ was proposed to serve as a complexity framework for understanding the complexity of problems lying in the intersection of $\PPAD$ and $\PLS$. A recent breakthrough result shows that $\CLS = \PPAD \cap \PLS$ \citep{fearnley2024complexity}. The $\CLS$ complexity class has been instrumental in capturing various problems in optimization and game theory. Notably, the task of finding a Karush-Kuhn-Tucker (KKT) point in quadratic programs has been proven to be $\CLS$-complete \citep{fearnley2024complexity}.

\begin{nproblem}[\textsc{QuadraticProgram-KKT}] \label{def:quadratic}
\textsc{Input :} A symmetric matrix $\mat{Q} \in \mathbb{R}^{n \times n}$ and vector $b\in \mathbb{R}^{n}$.
\\
  \noindent \textsc{Output :} Compute a local optimum (i.e., a KKT point) for the quadratic $\min_{\vx \in [0, 1]^n} \vx^\top \mat{Q} \vx + \vb^\top \vx$.
    
\end{nproblem}

\begin{nproblem}[\textsc{LocalTripletLoss-Euclidean}] \label{def:local-loss}
\textsc{Input :} Set $V\cup\{A,B\}$ of points in $\mathbb{R}^d$, set $\cal{C}$ of triplets $(\vx, \vy, \vz)$ of weight $w \ge 0$, margin $\alpha>0$.
\\
  \noindent \textsc{Output :}  Find an embedding $f: V \to [0,1]^d$ that is a first-order stationary point for the minimization objective given by the triplet-loss:\[\sum_{(\vx, \vy, \vz)\in \cal{C}} w \cdot \max \left\{\norm{f(\vx) - f(\vy)}_2^2 - \norm{f(\vx) - f(\vz)}_2^2 + \alpha, 0\right\}.\] 
\end{nproblem}

Recall, first-order stationary points are fixed points of gradient descent: $\vx^* \in \calX$ is a first-order stationary point of $g(\vx)$, if $\forall \vx \in \cal{X}$ we have $\langle\vx - \vx^*, \nabla_{\vx} g(\vx^*)\rangle \geq 0$ ($\cal{X}$: convex, compact domain). 

We emphasize the role of the two pivot points $A,B$ in the definition. Notice that if we did not have pivot points, then the embedding that maps all points from $V$ to the all-zeros vector would be a trivial first-order stationary point. Moreover, if we had only 1 pivot $A$, then again mapping every point in $V$ to $A$ would result in a first-order stationary point of the Triplet Loss. Thus having two pivot points are necessary and sufficient to make the problem non-trivial.

\section{\texorpdfstring{$\PLS$}{PLS}-hardness for discrete objectives in contrastive learning}\label{sec:hardness-pls}

In this section we present our results for  \textsc{LocalContrastive-Euclidean}, \textsc{LocalContrastive-Tree}, \textsc{LocalBetweenness-Euclidean} where the goal is to find local solutions for maximizing weight of satisfied triplets. We start with Euclidean embeddings, then have results on trees.%In \cref{sec:embedding_in_Rd}, we demonstrate that with various types of triplets on $\mathbb{R}^d,$ finding a local solution that locally maximizes the number of satisfied weighted triplets is $\PLS$-hard. In \cref{sec:hardness-tree}, we further extend the hardness result to finding a local solution in hierarchical clusterings.

% We start with the results on $\mathsf{PLS}$-hardness first, because we believe that the $\mathsf{PLS}$-reductions are easier to follow, given the discrete nature of the objective. If the reader is interested in Triplet Loss and $\mathsf{CLS}$-hardness, please see Sec. 4.

\subsection{The case of embeddings in \texorpdfstring{$\mathbb{R}^d$}{Rd} with \texorpdfstring{$d=1$}{d=1}}\label{sec:hardness-1d}

To set some notation and convey the key ideas for our later proofs, we start by presenting our reductions for the case of 1-dimensional embeddings, since our reductions for the general case are more involved.

\begin{theorem}\label{thm:local-contrastive-1d}
\textsc{LocalContrastive-Euclidean}, even for embedding dimension $d=1$, is $\PLS$-hard.
\end{theorem}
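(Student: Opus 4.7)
The plan is to give a $\PLS$-reduction from \textsc{LocalMaxCut}. Given a weighted graph $G(V,E)$ with non-negative edge weights $w_e$, I would build a 1-dimensional contrastive instance whose local optima decode back to local optima of MaxCut. The embedding will have two distinguished ``anchor'' points $A$ and $B$, plus one vertex point $p_v$ for each $v\in V$. The goal is to design triplets so that, at any local optimum, $A$ and $B$ are placed at the two extremes and each $p_v$ is pinned to one of two narrow regions -- ``close to $A$'' (encoding $v\in S$) or ``close to $B$'' (encoding $v\in\bar{S}$) -- and so that the objective equals a fixed constant plus the cut value $\mathrm{cut}(S,\bar{S})$.

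\textbf{Step 1 (anchor/skeleton gadget).} I would add a polynomial collection of triplets with a very heavy weight $W \gg \sum_e w_e$ whose role is to enforce a rigid skeleton: $A$ at the left end, $B$ at the right end, and each $p_v$ close to $A$ or to $B$, but not in between. Because any single dissatisfied heavy triplet costs more than the entire edge-triplet budget, at any local optimum all skeleton triplets must be satisfied. Typical building blocks are (i) triplets of the form $(A,p_v,B)$ and $(B,p_v,A)$ forcing $A$ and $B$ to straddle the vertex points, and (ii) ``pinning'' triplets involving clusters of heavy copies of $A$ and $B$, arranged so that every intermediate position of $p_v$ is strictly dominated by one of the two anchor regions.

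\textbf{Step 2 (edge gadgets).} For each edge $e=(u,v)\in E$, I would add a small bundle of triplets of total weight $w_e$ that, conditional on the skeleton being respected, is satisfied if and only if $p_u$ and $p_v$ lie on opposite sides. A natural template is to combine triplets like $(p_u,B,p_v)$ and $(p_v,A,p_u)$ together with their mirrors: when $p_u\approx A$ and $p_v\approx B$ the ``far'' anchor is strictly farther than the opposite-side vertex point and the bundle is fully satisfied, whereas when $p_u,p_v$ lie on the same side the satisfied weight drops by exactly $w_e$. Summed over all edges, the objective at a local optimum equals a skeleton-determined constant plus the MaxCut value of the induced partition $S=\{v:p_v\text{ near }A\}$, $\bar{S}=\{v:p_v\text{ near }B\}$.

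\textbf{Step 3 (correspondence) and the main obstacle.} A local move in \textsc{LocalContrastive-Euclidean} is ``relocate one point''. By skeleton rigidity, the only candidate improving unilateral moves at a local optimum are relocations of a single $p_v$ from one anchor region to the other, i.e.\ a side-flip; since the skeleton contribution is invariant under such a flip, the change in the contrastive objective exactly mirrors the change in the MaxCut value. Hence local optima of the contrastive instance decode to local optima of MaxCut in polynomial time, yielding the $\PLS$-reduction. The main obstacle is Step~1: designing a truly rigid skeleton in a \emph{continuous} 1-D setting while the per-triplet contribution is a piecewise-constant $\{0,1\}$ indicator. Because the objective depends only on relative distances, I need to argue that every intermediate position of $p_v$ strictly violates some heavy triplet and is therefore strictly dominated by a position in one of the two anchor regions, uniformly over small perturbations of the other points. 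This likely requires several clustered heavy copies of $A$ and $B$ together with a carefully chosen weight schedule. A secondary but delicate engineering point is making the edge bundle change by \emph{exactly} $w_e$ when a side flips (rather than by some weighted multiple); this should be handled by a balanced, symmetric set of triplets whose joint indicator is a clean $\{0,1\}$-cut indicator on ``same vs.\ opposite side''.
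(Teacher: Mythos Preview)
Your high-level plan (a $\PLS$-reduction from \textsc{LocalMaxCut} via a ``skeleton $+$ edge gadgets'' construction) is exactly right, and it is the same plan the paper follows. However, the concrete gadgets you sketch do not work, and the obstacles you flag in Step~1 and at the end of Step~2 are real: with only two extremal anchors $A,B$, triplets such as $(A,p_v,B)$ and $(B,p_v,A)$ merely say $|A-p_v|\le|A-B|$ and $|B-p_v|\le|A-B|$, which is satisfied everywhere \emph{between} $A$ and $B$ and therefore creates no forbidden middle region. Likewise your edge bundle $(p_u,B,p_v)$ is borderline exactly in the configurations you care about (e.g.\ $p_u\approx A$, $p_v\approx B$ gives $|p_u-B|\approx|p_u-p_v|$), so it does not cleanly separate ``same side'' from ``opposite side''.

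The paper avoids both difficulties with a single idea: rather than placing anchors at the two extremes and pinning vertices \emph{near} them, it uses a \emph{central} reference point $X$ together with two further special points $Y,Z$. Heavy constraints $(X,Y^+,Z^-)$ and $(Y,Z^+,X^-)$ force the order $X,Y,Z$ (or its reverse) with $|X-Y|\ge|Y-Z|$; then for each $v$, the heavy pair $(X,Y^+,v^-)$ and $(X,v^+,Z^-)$ forces $|X-Y|\le|X-v|\le|X-Z|$, i.e.\ $v$ lies in the segment $YZ$ or in its reflection $Y'Z'$ about $X$. Now the edge gadget is a \emph{single} triplet $(u,X^+,v^-)$ of weight $w_{uv}$: since every vertex is at distance in $[\,|X-Y|,|X-Z|\,]$ from $X$ and $|X-Y|\ge|Y-Z|$, same-side pairs have $|u-v|\le|Y-Z|\le|X-Y|\le|u-X|$ (unsatisfied) while opposite-side pairs have $|u-v|\ge 2|X-Y|\ge|X-Z|\ge|u-X|$ (satisfied). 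This gives exactly the $\{0,w_{uv}\}$ cut indicator you were trying to engineer, with no balancing or clustering tricks. The missing ingredient in your proposal is precisely this ``midpoint plus two symmetric bands'' layout using three special vertices instead of two.
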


\begin{proof}
    We present a $\PLS$-reduction from \textsc{LocalMaxCut} to \textsc{LocalContrastive-Euclidean} with $d=1$. Let $G = (V, E)$ be an undirected graph with edge-weights $w\ge0$. Let $W\coloneq \sum_{(u,v) \in E} w_{uv}$ be the total weight. Moreover, let $M \coloneq W + 1$ and $M' \coloneq (2|V| + 1)M$ denote two heavy weights. For our reduction, we introduce three special vertices $X, Y, Z$ (we use capital letters to distinguish them from the vertices of $G$). Eventually, the input to the \textsc{LocalContrastive-Euclidean} problem will be $V\cup\{X,Y,Z\}$ together with a collection of $m=|E|+2|V|+2$ contrastive triplets. Our goal is to show how we can recover a locally maximum cut for $G$, from a locally maximum embedding of our constructed instance.
    
    We add the following two types of contrastive triplet constraints:

%\yl{hi vaggos, I go for the figure for intro. now you own the token for Sec 3}
%\vnote{thanks! feel free to put the figure in intro}

\textbf{Type A (Edge Constraints).} For every edge $(u, v) \in E$ with weight $w_{uv}$, add a single triplet constraint $(u, X^+, v^-)$ with weight $w_{uv}$. This incentivizes embeddings to put $u$ closer to $X$ than $v$.

\textbf{Type B (Boundary Constraints).} We add the following:
\begin{itemize}
\vspace{-0.2cm}
    \item For every vertex $v \in V$, add $(X, Y^+, v^-)$ with weight $M$.
    \vspace{-0.2cm}
    \item For every vertex $v \in V$, add $(X, v^+, Z^-)$ with weight $M$.
    \vspace{-0.2cm}
    \item Finally, add $(Y, Z^+, X^-)$ with weight $M'$, and $(X, Y^+, Z^-)$ also with weight $M'$.
\end{itemize}
\vspace{-0.2cm}

In total, this creates $m=|E|+2|V|+2$ contrastive triplets on $V\cup\{X,Y,Z\}$. 

First, observe that because of the heavy constraints $(X, Y^+, Z^-)$ and $(Y, Z^+, X^-)$, any locally maximum embedding must satisfy those two constraints: if any of those two constraints was violated, we can always satisfy both simultaneously by moving $Z$ and forcing either the order $X < Y < Z$ or $Z < Y < X$, with distances $|X - Y| \geq |Y - Z|$, thus gaining at least $M'$ while losing at most $2 |V| M + \sum_{(u,v) \in E} w_{uv} < M'$, which would contradict local optimality.

Next, we show that in any locally maximum embedding, all $(X, Y^+, v^-)$ and $(X, v^+, Z^-)$ are satisfied, meaning that each $v \in V$ must lie on one of two line segments $YZ$ or $Y'Z'$, where $Y'$ and $Z'$ are the reflections of $Y$ and $Z$ with respect to $X$ respectively (see Figure \ref{fig:contrastive-1d}). Note that $Y',Z'$ are only used for the analysis and they are not part of the reduction. If any $(X, Y^+, v^-)$ or $(X, v^+, Z^-)$ is violated, we can always satisfy both by moving $v$ to segment $YZ$ or to $Y'Z'$, thus gaining at least $M$ while losing at most $\sum_{(u,v) \in E} w_{uv} < M = \sum_{(u,v) \in E} w_{uv} + 1$.

%\yl{very small bug: $Y'$ and $Z'$ are virtual (not occupying the position), so points can be placed on these endpoints to cheat. introduce real $Y'$ and $Z'$}

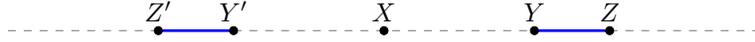
\begin{figure}[ht]
    \centering
    \begin{tikzpicture}
        \coordinate (Z') at (-3,0);
        \coordinate (Y') at (-2,0);
        \coordinate (X) at (0,0);
        \coordinate (Y) at (2,0);
        \coordinate (Z) at (3,0);
        \coordinate (l) at (-5,0);
        \coordinate (r) at (5,0);
        
        \draw[dashed, gray] (l) -- (Z');
        \draw[blue, line width=1pt] (Z') -- (Y');
        \draw[dashed, gray] (Y') -- (Y);
        \draw[blue, line width=1pt] (Y) -- (Z);
        \draw[dashed, gray] (Z) -- (r);
        
        \filldraw (Z') circle (1.5pt) node[above] {$Z'$};
        \filldraw (Y') circle (1.5pt) node[above] {$Y'$};
        \filldraw (X) circle (1.5pt) node[above] {$X$};
        \filldraw (Y) circle (1.5pt) node[above] {$Y$};
        \filldraw (Z) circle (1.5pt) node[above] {$Z$};
    \end{tikzpicture}
    \caption{Reduction ($d=1$): Any local optimum places $v \in V$ in segment $YZ$ or its reflection $Y'Z'$.}
    \label{fig:contrastive-1d}
\end{figure}

Finally, it is clear that $(u, X^+, v^-)$ is satisfied if and only if $u$ and $v$ are placed on different sides of $X$. This encodes a solution to the \textsc{LocalMaxCut} instance, by defining one side of the cut $(S,\bar{S})$ to be all vertices placed in segment $YZ$ and the other side to be the remaining vertices. 
\end{proof}

\subsection{Hardness for general \texorpdfstring{$d$}{d}-dimensional embeddings}\label{sec:hardness-dd}
Extending the above proof, our two results for general Euclidean embeddings are:

\begin{theorem}\label{thm:local-contrastive-dd}
For every fixed dimension $d\ge 1$, \textsc{LocalContrastive-Euclidean} is $\PLS$-hard.
\end{theorem}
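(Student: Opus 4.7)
The strategy is to extend the 1D reduction of \Cref{thm:local-contrastive-1d} to arbitrary fixed dimension $d \ge 2$ by attaching a ``dimensional confinement'' gadget that, in any local optimum, forces the points in $V \cup \{X, Y, Z\}$ to become collinear inside $\mathbb{R}^d$. Once collinearity is established, the geometry collapses to the one-dimensional picture of \Cref{thm:local-contrastive-1d} along the common line, and the same segment-vs-reflected-segment partition argument converts any locally optimal $d$-dimensional embedding into a locally maximum cut of the input graph $G$.

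First, I would reuse the entire Type A and Type B constraint system from the proof of \Cref{thm:local-contrastive-1d}, with the same weights $M$ and $M'$. The argument that in any local optimum $Y$ must lie between $X$ and $Z$ and that every $v \in V$ must lie in the annulus $\{p \in \mathbb{R}^d : \|X-Y\|_2 \le \|p-X\|_2 \le \|X-Z\|_2\}$ around $X$ is purely metric and dimension-free, so it carries over to $\mathbb{R}^d$ verbatim.

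Next, to kill the extra degrees of freedom that $d \ge 2$ allows within the annulus, I would introduce, for each extra coordinate index $i \in \{2, \ldots, d\}$, a pair of auxiliary pivots $P_i, Q_i$ together with two groups of triplets. The first group consists, for every $v \in V \cup \{X,Y,Z\}$, of a symmetric pair of contrastive triplets $(v, P_i, Q_i)$ and $(v, Q_i, P_i)$ with a new weight $M'' \gg M'$. Because the contrastive inequality is weak, both triplets of such a pair are simultaneously satisfied exactly when $\|f(v)-f(P_i)\|_2 = \|f(v)-f(Q_i)\|_2$, i.e., when $f(v)$ lies on the perpendicular bisector of segment $f(P_i) f(Q_i)$; moving $v$ onto that bisector strictly gains $M''$ while losing at most $M'$ on every other constraint, so any local optimum must place each relevant vertex on all $d-1$ bisectors. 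The second group is an anchoring layer among the $P_i, Q_i$ and $X$ themselves, built in the same spirit as the Type B constraints but applied to the transverse directions and given a yet heavier weight $M''' \gg M''$; these triplets will impose that $X$ is the common midpoint of each $P_i Q_i$ and that the directions $P_i - Q_i$ are pairwise linearly independent, so that the intersection of the $d-1$ bisecting hyperplanes is a 1-dimensional affine line through $X, Y, Z$.

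The main technical obstacle is this anchoring layer. Contrastive constraints can encode only weak distance inequalities, so guaranteeing both $P_i \neq Q_i$ (otherwise the $i$-th bisector becomes all of $\mathbb{R}^d$ and imposes no confinement) and linear independence of the $d-1$ bisector normals must be done through carefully balanced pairs of heavy triplets whose simultaneous saturation singles out an essentially rigid configuration (up to an isometry of the ambient space). Once this step is handled, the overall construction has $\mathrm{poly}(|V|, |E|, d)$ size, the weight hierarchy $M''' \gg M'' \gg M' \gg M \gg W$ guarantees that the analysis of each layer dominates all lighter ones, and the partition is recovered exactly as in \Cref{thm:local-contrastive-1d}: the two sides of the locally maximum cut of $G$ correspond to the two intervals on either side of $X$ along the common line into which the embeddings of the vertices of $V$ are forced to fall.
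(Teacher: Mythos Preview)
Your high-level strategy---confine $V\cup\{X,Y,Z\}$ to a one-dimensional affine subspace via symmetric ``equidistance'' contrastive pairs, then reuse the $d=1$ argument verbatim along that line---is exactly the approach the paper takes. The difference is that you leave the hard step undone: you correctly identify the anchoring layer (forcing the $d-1$ bisector normals $P_i-Q_i$ to be linearly independent and the $d-1$ hyperplanes to have a common line) as ``the main technical obstacle,'' but you never construct it. Without it, nothing prevents a local optimum in which several $P_i-Q_i$ directions are parallel, so the intersection of bisectors is a higher-dimensional flat and the collapse to the $1$-D picture fails. Saying it ``must be done through carefully balanced pairs of heavy triplets'' is not a construction.

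The paper closes this gap with a cleaner gadget that sidesteps the independence issue entirely. Instead of $d-1$ free pairs $(P_i,Q_i)$, it uses $d$ special vertices $X_1,\dots,X_d$ and forces them into a regular $(d{-}1)$-simplex via pairwise ``equilateral'' constraints (each betweenness triplet $(X_j,X_k,X_\ell)$ encoded as the two contrastive triplets $(X_j,X_k^+,X_\ell^-)$ and $(X_\ell,X_k^+,X_j^-)$), with a hierarchical weight scheme $M_3>M_4>\cdots>M_d$ that lets one argue inductively that the simplex is non-degenerate. The point is that $d$ distinct, pairwise-equidistant points in $\mathbb{R}^d$ are automatically affinely independent, so there is no separate linear-independence condition to enforce. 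Then for every $v\in V$ the ``isosceles'' pairs $(X_k,X_\ell,v)$ and $(X_\ell,X_k,v)$ force $\|v-X_k\|=\|v-X_\ell\|\ge\|X_k-X_\ell\|$ for all $k,\ell$, which confines $v$ to the perpendicular line through the simplex centroid and in fact to two opposing rays on it. Two further special vertices $Y,Z$ (exactly as in your Type~B layer) then cut these rays down to segments. So the simplex construction is precisely the missing ``anchoring layer'' you were looking for; once you plug it in, your outline becomes the paper's proof.
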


\begin{theorem}\label{thm:local-betweenness-dd}
For every fixed dimension $d\ge 1$, \textsc{LocalBetweenness-Euclidean} is $\PLS$-hard. 
\end{theorem}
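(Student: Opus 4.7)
The plan is a $\PLS$-reduction from \textsc{LocalMaxCut} to \textsc{LocalBetweenness-Euclidean}, paralleling the boundary-gadget architecture of the proof of \Cref{thm:local-contrastive-1d} but using betweenness triplets in place of contrastive ones. I would first handle $d = 1$ in detail and then lift the construction to any fixed $d \geq 2$ with appropriate parameter choices.

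For $d = 1$, given an edge-weighted graph $G = (V, E)$ with total edge weight $W := \sum_{uv \in E} w_{uv}$, I introduce five auxiliary points $L_1, L_2, C, R_1, R_2$ and aim to force the ordering $L_1 < L_2 < C < R_1 < R_2$ (up to a global reflection) at every local optimum via super-heavy betweenness triplets $(L_1, L_2, C)$, $(L_2, C, R_1)$, $(C, R_1, R_2)$, each of weight $M'$ chosen large enough to dominate the sum of all remaining constraints. For every vertex $v \in V$ I add two ``side-picker'' betweenness triplets $(L_1, v, L_2)$ and $(R_1, v, R_2)$, each of weight $M := W + 1$: at any local optimum these force $v$ to lie in the segment $[L_1, L_2]$ or in $[R_1, R_2]$, since outside both segments neither triplet is satisfied, while inside either one exactly one is satisfied, gaining $M > W$ and outweighing any possible loss on edge triplets. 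Finally, for each edge $(u, v) \in E$ of weight $w_{uv}$ I add the single edge triplet $(u, C, v)$ of weight $w_{uv}$; given the segment-placement of $u$ and $v$, this is satisfied exactly when $C$ lies between $u$ and $v$, i.e.\ when $u$ and $v$ occupy different segments. The edge-triplet contribution then equals the weighted cut value, and single-vertex moves of the two problems match bijectively.

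For general fixed $d \geq 2$, I embed the same construction along the first coordinate axis: $L_1 = (-R - \varepsilon, 0, \dots, 0)$, $L_2 = (-R, 0, \dots, 0)$, $C = (0, \dots, 0)$, $R_1 = (R, 0, \dots, 0)$, $R_2 = (R + \varepsilon, 0, \dots, 0)$ with $R \gg \varepsilon > 0$ (e.g.\ $R = 10$, $\varepsilon = 1/100$). In $d$ dimensions, the side-picker $(L_1, v, L_2)$ forces $v$ into the intersection of two balls of radius $\varepsilon$ around $L_1$ and $L_2$---a Vesica Piscis lens of diameter $O(\varepsilon)$ centered near $L_2$---and symmetrically for the $R$-lens. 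Since $R \gg \varepsilon$, two vertices $u, v$ in the same lens have $\|u - v\|^2 = O(\varepsilon^2)$ while each of $\|u\|^2, \|v\|^2$ is $\Theta(R^2)$, so the edge triplet $(u, C, v)$ is unsatisfied; two vertices in different lenses have $\|u - v\|^2 = \Theta(R^2)$ strictly larger than their squared distances to $C$, so the edge triplet is satisfied. The rest of the weight-hierarchy argument is then identical to the $d = 1$ case, and a locally maximum cut is recovered by classifying each vertex by which lens contains it.

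The main step requiring real care is verifying that the super-heavy order constraints for the five auxiliary points are satisfied at every local optimum in $d$ dimensions (immediate in 1D but subtle in higher dimensions, since fixing one such constraint by moving a single point can disturb others), together with the three-way distance separation---small for same-lens pairs, $\Theta(R^2)$ to $C$, strictly larger for different-lens pairs---uniformly over all admissible positions within the lenses rather than only at their centers. I expect this parameter-tuning step to be the main obstacle; the plan is to take $M' = \Theta(|V| \cdot M)$ and add a few redundant order triplets, for example $(L_1, C, R_2)$, $(L_1, L_2, R_2)$, $(L_1, R_1, R_2)$, so that every auxiliary point is pinned by at least one order triplet whose correction is independent of the others. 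If the direct $d$-dimensional analysis turns out to be fragile, a clean fallback is to first establish the $d = 1$ case and then reduce to higher dimensions via a padding argument that pins each vertex's non-first coordinates to $0$ using axis-aligned anchor triplets, inheriting correctness from the $d = 1$ analysis.
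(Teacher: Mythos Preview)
Your $d=1$ sketch is plausible in spirit but already more complicated than needed: the paper's $d=1$ case uses a \emph{single} auxiliary vertex $X_1$ with only the edge constraints $(u,X_1,v)$, and observes that any local optimum of the betweenness instance is a local max cut (vertices left/right of $X_1$ form the cut). More importantly, even with your five auxiliary points and the three chain constraints $(L_1,L_2,C),(L_2,C,R_1),(C,R_1,R_2)$, there are local optima in which only two of the three are satisfied: e.g.\ $L_1=0,L_2=1,C=10,R_1=9,R_2=8$ has the first and third satisfied but not the middle, and no single move can fix the middle one without breaking another. You acknowledge this and propose redundant constraints; that may be repairable, but it needs a genuine case analysis, not just a list of extra triplets.

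The real gap is in $d\ge 2$. Your analysis silently assumes a fixed distance scale (``$L_1=(-R-\varepsilon,0,\dots,0)$, $L_2=(-R,0,\dots,0)$, \dots'') and then argues that the $L$-lens has diameter $O(\varepsilon)$ while $C$ is at distance $\Theta(R)$. But in \textsc{LocalBetweenness-Euclidean} \emph{every} point, including the auxiliaries, is placed by the embedding; the betweenness constraints control only which side of each triangle is longest, never distance \emph{ratios}. Concretely, the configuration $L_1=(0,0)$, $L_2=(1.9,0)$, $C=(2,0)$, $R_1=(2.1,0)$, $R_2=(4,0)$ satisfies all three chain constraints, yet the $L$-lens (intersection of the two radius-$1.9$ disks about $L_1$ and $L_2$) has perpendicular diameter $1.9\sqrt3\approx 3.29$, so taking $v_1=(0.95,1.645)$ and $v_2=(0.95,-1.645)$ both in the $L$-lens gives $\|v_1-v_2\|\approx 3.29 > \|v_i-C\|\approx 1.95$, and the edge triplet $(v_1,C,v_2)$ is \emph{satisfied}. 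Thus same-lens pairs can look ``cut,'' and the decoding of the cut from lens membership breaks down. Your fallback of ``pinning non-first coordinates to $0$'' cannot work either: betweenness is invariant under all isometries of $\mathbb R^d$, so no finite collection of triplets can force points onto a fixed axis.

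The paper handles $d\ge 2$ with an idea you are missing: it introduces $d$ special vertices $X_1,\dots,X_d$, uses pairwise \emph{isosceles} constraints $(X_k,X_\ell,v)$ and $(X_\ell,X_k,v)$ to force each $v$ to be equidistant from all $X_k$ at distance at least the simplex edge length---placing $v$ on one of two rays through the centroid perpendicular to the simplex hyperplane---and uses triple \emph{equilateral} constraints to force $X_1,\dots,X_d$ into a regular simplex. The key point is that these gadgets fix the relevant \emph{angles} ($\angle uX_1v<30^\circ$ for same-ray, $\ge 120^\circ$ for opposite-ray) independently of any distance scale, so the edge triplet $(u,X_1,v)$ is satisfied iff $u,v$ lie on opposite rays, with no parameter tuning needed.
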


All proofs can be found in the \Cref{app:betweenness-high}. Due to space constraints, we give the proof for \textsc{LocalBetweenness-Euclidean} for the case $d=2$.  Similar gadgets are used to prove \Cref{thm:local-contrastive-dd}.

\begin{proof}[Proof of \Cref{thm:local-betweenness-dd} ($d=2$)]
 Our $\PLS$-reduction is from \textsc{LocalMaxCut}. As previously, let $G = (V, E)$ be an undirected graph with edge-weights $w\ge0$. Let $W\coloneq \sum_{(u,v) \in E} w_{uv}$, and let $M \coloneq W + 1$ denote a heavy weight. 
 For our reduction, we introduce two special vertices $X_1, X_2$. %Eventually, the input to the \textsc{LocalContrastive-Euclidean} problem will be $V\cup\{X,Y,Z\}$ together with a collection of $m=|E|+2|V|+2$ contrastive triplets. 
 Our goal is to show how we can recover a locally maximum cut for $G$, from a locally maximum embedding of our constructed instance. We add the following two types of contrastive triplet constraints:
    \item \textbf{Type A (Edge Constraints).}
For each edge $(u, v) \in E$ with weight $w_{uv}$, add a single constraint $(u, X_1, v)$ with weight $w_{uv}$. Geometrically, the semantics of the triplet in \textsc{LocalBetweenness-Euclidean} is that the pair $u,v$ is the farthest distance apart, i.e., the largest edge in the triangle formed by $u, X_1, v$ is the edge $(u,v)$.

\item \textbf{Type B (Isosceles Constraints).}
For each $v \in V$, add two constraints $(X_1, X_2, v)$ and $(X_2, X_1, v)$, each with weight $M$.
Intuitively, because of the heavy weight, this forces $X_1, X_2, v$ to form an \emph{isosceles triangle}, such that $\| v - X_1 \|_2 = \| v - X_2 \|_2 \geq \| X_1 - X_2 \|_2$, where we overload the notation $v,X_1,X_2$ to also denote the vertex embeddings in the 2D-plane.

Observe that in any local optimum, all Type B constraints must be satisfied. If any Type B constraint involving $v$ is unsatisfied, we can always satisfy it by moving $v$ to form the corresponding isosceles triangle, thus gaining at least $M$ while losing at most $\sum w_{uv} < M$, which would strictly increase the objective value. This effectively forces every vertex $v$ to lie on one of two rays (see Figure \ref{fig:betweenness-2d}).

\begin{figure}[ht]
    \centering
    \begin{minipage}{0.45\textwidth}
        \centering
        \begin{tikzpicture}[scale=0.7]
            \coordinate (C) at (1.8660254,0.5);
            \coordinate (u) at (2.3222432,0.6222432);
            \coordinate (v) at (3.4150635,0.9150635);
            \coordinate (D) at (-0.5,-0.1339746);
            \coordinate (X_1) at (0.5,0.8660254);
            \coordinate (X_2) at (0.8660254,-0.5);
            \coordinate (l) at (-1.3660254,-0.3660254);
            \coordinate (r) at (4.0980762,1.0980762);

            \fill[violet!15] (X_1) -- (u) -- (v) -- cycle;
            
            \draw[dashed, gray] (D) -- (C);
            \draw[blue, line width=1pt] (l) -- (D);
            \draw[blue, line width=1pt] (C) -- (r);

            \filldraw (u) circle (1.5pt) node[below right] {$u$};
            \filldraw (v) circle (1.5pt) node[below right] {$v$};
            \filldraw (X_1) circle (1.5pt) node[above left] {$X_1$};
            \filldraw (X_2) circle (1.5pt) node[below right] {$X_2$};
        \end{tikzpicture}
    \end{minipage}
    \begin{minipage}{0.45\textwidth}
        \centering
        \begin{tikzpicture}[scale=0.7]
            \coordinate (C) at (1.8660254,0.5);
            \coordinate (u) at (-1.0928203,-0.2928203);
            \coordinate (v) at (2.7320508,0.7320508);
            \coordinate (D) at (-0.5,-0.1339746);
            \coordinate (X_1) at (0.5,0.8660254);
            \coordinate (X_2) at (0.8660254,-0.5);
            \coordinate (l) at (-2.0490381,-0.5490381);
            \coordinate (r) at (3.4150635,0.9150635);

            \fill[violet!15] (X_1) -- (u) -- (v) -- cycle;
            
            \draw[dashed, gray] (D) -- (C);
            \draw[blue, line width=1pt] (l) -- (D);
            \draw[blue, line width=1pt] (C) -- (r);

            \filldraw (u) circle (1.5pt) node[below right] {$u$};
            \filldraw (v) circle (1.5pt) node[below right] {$v$};
            \filldraw (X_1) circle (1.5pt) node[above left] {$X_1$};
            \filldraw (X_2) circle (1.5pt) node[below right] {$X_2$};
        \end{tikzpicture}
    \end{minipage}
    \caption{Reduction ($d=2)$ for \textsc{LocalBetweenness-Euclidean}: Type B constraints force all $v \in V$ onto two opposing rays. Left: $(u,X_1,v)$ is not satisfied when $u$ and $v$ lie on the same ray. Right: $(u,X_1,v)$ is satisfied when $u$ and $v$ lie on different rays.}
    \label{fig:betweenness-2d}
\end{figure}
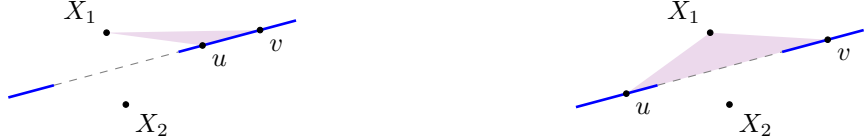

Next, we show that in any locally maximum embedding for \textsc{LocalBetweenness-Euclidean}, a Type A constraint $(u, X_1, v)$ is satisfied if and only if $u$ and $v$ are on \textit{opposite} rays. To see this, let us analyze the triangle $\triangle u X_1 v$:
\begin{itemize}
\vspace{-0.2cm}
    \item If $u$ and $v$ are on the same ray, we have $\angle u X_1 v < 30 ^\circ$. Basic trigonometry tells us $uv$ cannot be the longest side of the triangle, thus $(u, X_1, v)$ is not satisfied.
    \vspace{-0.2cm}
    \item If $u$ and $v$ are on opposite rays, we have $\angle u X_1 v \geq 120 ^\circ$. Basic trigonometry tells us $uv$ must be the longest side of the triangle, thus $(u, X_1, v)$ is satisfied.
\end{itemize}
\vspace{-0.2cm}
Therefore, the configuration of a locally maximum embedding encodes a local max cut for $G$. We can easily recover the cut $(S,\bar{S})$ by checking whether each Type A constraint $(u, X_1, v)$ is satisfied in the configuration, e.g., $S$ can be all vertices in one ray.
\end{proof}

\subsection{Hardness of tree embeddings}\label{sec:hardness-tree}

% \jy{Add formal proof in the appendix}
%In this section we show the complexity result of triplets. Specifically, we show the following theorem.

% We firstly define the triplet constraint, following the notion as in \citep{chatziafratis2023triplet}.

% \begin{definition}[Triplets]
%     A rooted, unordered, binary tree $T$ is said to satisfy triplet $t = ab|c$ if the Lowest Common Ancestor of $a$ and $b$, $LCA(a, b)$, is a proper descendant of $LCA(a, c)$ in $T$. Generally, each triplet $t_i$ can have positive weight $w_i$.
% \end{definition}

% We proceed to introduce the $\Triplets$ problem.

% \begin{nproblem}[\Triplets]
%  \textsc{Input:}  
%  \begin{itemize}
%    \item A collection of $n$ real numbers $a_1,\dots,a_n$.
%    \item A collection of $m$ triples $t_1,\dots, t_{m}$ with positive weights $w_1,\dots, w_{m}$. Each triplet is defined on three points in $\{a_1, \dots, a_n\}$.
% \end{itemize}

%   \noindent \textsc{Output:} A hierarchical clustering (i.e. the binary rooted tree) $T$ that maximize the utility $\sum_{i = 1}^m w_i \cdot \mathbb{I}(T \text{ satisfies } t_i)$.
% \end{nproblem}

% \jy{need to define the local moves here}

% We are ready to show the main result for this section.
\begin{theorem}\label{thm:local-contrastive-tree}
The \textsc{LocalContrastive-Tree} problem is $\PLS$-hard.
\end{theorem}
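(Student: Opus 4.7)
The plan is a $\PLS$-reduction from $\textsc{LocalMaxCut}$ to $\textsc{LocalContrastive-Tree}$, in the same spirit as the Euclidean reductions of \Cref{sec:hardness-1d,sec:hardness-dd}. Given a weighted graph $G=(V_G,E_G,w)$ with total edge weight $W=\sum_{e\in E_G}w_e$, I form a triplet instance whose leaves are $V=V_G\cup\{A,B\}\cup\mathcal{P}$, where $A,B$ are two designated pivots and $\mathcal{P}$ is a constant-size collection of auxiliary pivots. The triplets come in two layers: \emph{scaffold triplets} of heavy weight $M\gg W$ involve only the pivots and are chosen so that at every tree local optimum, $A$ and $B$ lie in different subtrees of the root (i.e., $\mathrm{LCA}(A,B)$ is the root), forcing every $v\in V_G$ to sit in either the $A$-subtree or the $B$-subtree and thus inducing a bipartition $(V_A,V_B)$ of $V_G$; and \emph{edge triplets}, one group per edge $(u,v)\in E_G$, consisting of the four triplets $(u,A,v),(v,A,u),(u,B,v),(v,B,u)$, each with weight $w_{uv}$.

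The key structural observation is that in a rooted binary tree, relocating a single leaf $v$ only changes LCAs of pairs that involve $v$, so the analysis of local moves decouples across triplets. A short case analysis of the four edge triplets shows: if $(u,v)$ is a \emph{cut} edge, exactly $2$ of the $4$ triplets are satisfied (contribution $2w_{uv}$); if $u,v$ lie on the same side, at most $1$ is satisfied---contributing $w_{uv}$ precisely when one of $\{u,A\}$ or $\{v,A\}$ (for $u,v\in V_A$; symmetrically for $V_B$) is strictly more closely related in $T$ than $\{u,v\}$, and $0$ in the ``tied'' case where $\{u,v\}$ is the most closely related pair among the three. The next step is to argue that ties cannot occur for edge pairs at any tree local optimum: if $u,v\in V_A$ with $(u,v)\in E_G$ were strictly closer to each other than to $A$, relocating $v$ to sit along the path from the $V_A$-subtree's root to $A$---a move always available because the auxiliary pivots along that path provide enough depth slots---would convert the pair's contribution from $0$ to $w_{uv}$ while, by the decoupling observation, leaving every other triplet's satisfaction status unchanged. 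Hence at any local optimum the objective equals $\sum_{e\in E_G}w_e+\mathrm{cut}(V_A,V_B)$, and relocating a single leaf $v\in V_G$ to the opposite side changes the objective by exactly $\sum_{u\in V_A\cap N(v)}w_{uv}-\sum_{u\in V_B\cap N(v)}w_{uv}$, which is precisely the $\textsc{LocalMaxCut}$ flip value of $v$. Reading the bipartition $(V_A,V_B)$ off a tree local optimum therefore yields a local max cut of $G$.

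The step I expect to be hardest is engineering the scaffold so that it is robust: even though the edge triplets also reference $A$ and $B$, the pivots must remain on opposite sides of the root in every local optimum, and each side must contain enough depth slots for the tie-breaking relocation. I plan to handle this by clustering $A$ (respectively $B$) with a constant number of auxiliary pivots via heavy triplets of the form $(A_i,A_j,B_k)$ and $(B_i,B_j,A_k)$, so that any leaf-move that dislodges a pivot loses at least $M$ from scaffold constraints while gaining only $O(W)\ll M$ from edge triplets; the same auxiliary pivots simultaneously supply the depth slots along the $A$- and $B$-paths needed in the tie-breaking step.
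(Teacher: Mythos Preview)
Your overall strategy is reasonable, but there is a genuine gap in the scaffold design. You assert that pivot-only scaffold triplets will force $\mathrm{LCA}(A,B)$ to be the root, so that every $v\in V_G$ automatically lands in the $A$-subtree or the $B$-subtree. Triplets that involve only pivots cannot do this: take any tree in which one child of the root leads to a subtree containing all the pivots (arranged so that every $(A_i,A_j,B_k)$ and $(B_i,B_j,A_k)$ constraint is satisfied) while the other child of the root is a single graph vertex $v$. Every scaffold constraint is met, yet $v$ lies outside the subtree rooted at $\mathrm{LCA}(A,B)$ and is in neither $V_A$ nor $V_B$. Moreover, such a configuration can be a tree local optimum for your edge triplets: if $v$'s neighbour $u$ sits in $V_A$, then both $(u,A,v)$ and $(u,B,v)$ are satisfied, so the edge already achieves its maximum contribution $2w_{uv}$; relocating $v$ into $V_B$ leaves the score unchanged, and relocating $v$ into $V_A$ strictly decreases it. Your recovered bipartition then fails to cover $V_G$, and you give no rule for assigning the ``outside'' vertices that would preserve local-max-cut optimality. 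The paper closes exactly this gap by adding, for each $v\in V_G$, heavy per-vertex triplets (its Type~C constraints $Yv|X$ and $Zv|X$) that force $v$ into one of two designated subtrees; any workable scaffold here must likewise involve the graph vertices, not just the pivots.

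A smaller issue concerns the tie-breaking step. The claim that relocating $v$ to sit next to $A$ ``leaves every other triplet's satisfaction status unchanged'' is false as stated: every edge triplet containing $v$ may be affected, and for a same-side neighbour $u'$ the satisfied triplet can switch from $(u',A,v)$ to $(v,A,u')$. What is true, and sufficient, is that the \emph{contribution} of each edge incident to $v$ does not decrease under this move (same-side edges either stay at $w$ or rise from $0$ to $w$; cross-side edges stay at $2w$), so the move is still strictly improving whenever $v$ is in a tie. Your conclusion survives, but the justification must be rewritten along these lines.
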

\vspace{-0.2cm}
\begin{proof}[Proof Sketch.]
    
We do a $\PLS$-reduction from \textsc{LocalMaxCut}. Given graph $G(V, E)$, we construct triplets over nodes $V\cup \{X, X', Y, Z\}$ for special nodes $\{X, X', Y, Z\}$. The triplets are as follows:
    \begin{itemize}
    \vspace{-0.2cm}
        \item \textbf{Type A triplets}: $XX'|Y$, $XX'|Z$, and $XX'|v$ for all $v \in V$, each with weight $\sum_{e \in E}w_e$
        \vspace{-0.2cm}
        \item \textbf{Type B triplets}: $XY|Z$ with a large weight $|V| \cdot \sum_{e \in E}w_e + |V| + 1$
        \vspace{-0.2cm}
        \item \textbf{Type C triplets}: $Yv|X$ and $Zv|X$ for all $v \in V$, each with weight $\sum_{e \in E}w_e + 1$
        \vspace{-0.2cm}
        \item \textbf{Type D triplets}: $uX|v$ and $vX|u$ whenever $(u, v) \in E$, each with weight $w_{uv}/2$
        \vspace{-0.2cm}
    \end{itemize}
    The purpose of the special vertices and the constraints is that they force the tree to look as in \Cref{fig:triplet}: $X,X'$ are siblings, and any other vertex $v\in V$ is either in subtree $T_Y$ containing $Y$ or subtree $T_Z$ containing $Z$ (because of heavy Type C triplets). Then, the solution to \textsc{LocalMaxCut} can be formed by all nodes that fell in $T_Y$ as one side of the cut, and to $T_Z$ as the other side. The proof proceeds by a case analysis showing that this is indeed a local max cut, and that no vertex can increase the weight of the cut edges by moving to the other subtree (for full proof, see~\Cref{sec:proof_hardness_tree}).
    \end{proof}

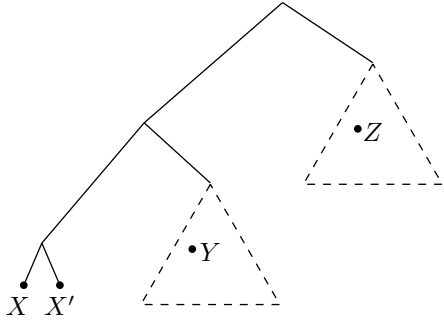
\begin{figure}[ht]
    \centering
    \begin{tikzpicture}[scale = 0.8,
        leaf/.style={
            draw,
            dashed,
            isosceles triangle,
            isosceles triangle apex angle=60,
            shape border rotate=90,
            minimum height=1.6cm,
            anchor=apex
        },
        note/.style={draw=none, rectangle, align=center},
        line width=0.5pt
    ]
    
    \coordinate (root) at (0,0);
    \coordinate (a) at (-2.3,-2);
    \coordinate (b) at (-4,-4);
    \coordinate (x) at (-4.3,-4.7);
    \coordinate (x') at (-3.7,-4.7);
    
    % labels
    \node[anchor=base] at (-4.4,-5.2) {$X$};
    \node[anchor=base] at (-3.7,-5.2) {$X'$};
    \node[anchor=base] at (-1.2,-4.3) {$Y$};
    \node[anchor=base] at (1.5,-2.3) {$Z$};
    
    \filldraw (x) circle (1.5pt);
    \filldraw (x') circle (1.5pt);
    \filldraw (-1.5,-4.1) circle (1.5pt);  % Y
    \filldraw (1.25,-2.1) circle (1.5pt);  % Z

    % \node at (-1.2,-4.5) {\LARGE $A$};
    % \node at (1.5,-2.5) {\LARGE $B$};
    
    \node[leaf] (y) at (-1.2,-3) {};
    \node[leaf] (z) at (1.5,-1) {};
    
    \draw (root.center) -- (z.apex);
    \draw (root.center) -- (a.center);
    \draw (a.center) -- (y.apex);
    \draw (a.center) -- (b.center);
    \draw (b.center) -- (x);
    \draw (b.center) -- (x');
    \end{tikzpicture}
    \caption{Reduction for \textsc{LocalContrastive-Tree}.}
    \label{fig:triplet}
\end{figure}
\section{\texorpdfstring{$\CLS$}{CLS}-hardness for continuous objectives in contrastive learning}\label{sec:hardness-cls}

\begin{theorem} \label{thm:CLS_hard_contrastive}
For every dimension $d\ge1$, \textsc{LocalTripletLoss-Euclidean} is $\CLS$-hard. 
\end{theorem}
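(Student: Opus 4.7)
The plan is to give a $\CLS$-reduction from the $\CLS$-hard problem \textsc{QuadraticProgram-KKT}. Given a symmetric $\mat{Q}\in\mathbb{R}^{n\times n}$ and $\vb\in\mathbb{R}^n$, I will construct in polynomial time an instance of \textsc{LocalTripletLoss-Euclidean} in dimension $d=1$ whose Triplet Loss $\mathcal{L}$, as a function of the embedding values $\vtheta := (f(v_1),\dots,f(v_n))\in[0,1]^n$, equals $\vtheta^\top\mat{Q}\vtheta + \vb^\top\vtheta$ up to an additive constant. A first-order stationary point of $\mathcal{L}$ will then immediately yield the KKT point $\vx^*:=\vtheta^*$ of the quadratic program.

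\paragraph{Elementary gadgets.}
Place the two pivots at $f(A)=0$ and $f(B)=1$ and introduce one vertex $v_i\in V$ per QP variable, writing $\theta_i := f(v_i)\in[0,1]$. A direct expansion shows that, whenever the inner argument is strictly positive, each of the following six triplets contributes to $\mathcal{L}$ the indicated polynomial in $\vtheta$:
\begin{align*}
&(v_i, A, v_j) \,\mapsto\, w\!\left(2\theta_i\theta_j - \theta_j^2 + \alpha\right), \quad (v_i, v_j, A) \,\mapsto\, w\!\left(-2\theta_i\theta_j + \theta_j^2 + \alpha\right), \\
&(A, v_i, B) \,\mapsto\, w\!\left(\theta_i^2 + \alpha - 1\right), \quad (B, A, v_i) \,\mapsto\, w\!\left(-\theta_i^2 + 2\theta_i + \alpha\right), \\
&(v_i, A, B) \,\mapsto\, w\!\left(2\theta_i - 1 + \alpha\right), \quad (v_i, B, A) \,\mapsto\, w\!\left(-2\theta_i + 1 + \alpha\right).
\end{align*}
These six ``mirror-image'' gadgets span the monomials $\pm\theta_i\theta_j,\pm\theta_i^2,\pm\theta_i$ with non-negative weights.

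\paragraph{Encoding the target and closing the reduction.}
I match the target coefficients in three passes while tracking cross-gadget side effects: (i) for every off-diagonal $Q_{ij}$ with $i<j$, pick the bilinear gadget whose sign matches $Q_{ij}$ with weight $|Q_{ij}|$, giving exactly $2Q_{ij}\theta_i\theta_j$ together with a recorded spurious $-Q_{ij}\theta_j^2$ contribution; (ii) accumulate these spurious coefficients into $s_k$ and correct the coefficient of $\theta_k^2$ to $Q_{kk}$ via $(A,v_k,B)$ or $(B,A,v_k)$ depending on the sign of $Q_{kk}-s_k$, the latter adding a known $+2\theta_k$ side effect; (iii) correct the remaining linear coefficient to $b_k$ via $(v_k,A,B)$ or $(v_k,B,A)$. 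All weights are non-negative by construction, and the weighted sum of inner arguments equals $\vtheta^\top\mat{Q}\vtheta + \vb^\top\vtheta + C$ for a constant $C$. Over $\vtheta\in[0,1]^n$, every gadget's inner-argument-minus-$\alpha$ lies in $[-2,2]$, so taking $\alpha=3$ (independent of the input) keeps every $\max\{\cdot,0\}$ strictly active throughout the feasible set; the loss is therefore smooth on $[0,1]^n$ and its first-order stationary points coincide with KKT points of the quadratic program. The construction is polynomial-time with polynomial Lipschitz constants, so this is a valid $\CLS$-reduction. For $d\geq 2$, I reuse the above along the first coordinate and add, for each $i$ and each $\ell\in\{2,\dots,d\}$, a heavy triplet $(v_i, A, C_\ell)$ with an auxiliary pivot $C_\ell$ at $\bm{e}_\ell$; each such triplet contributes a strictly positive partial derivative in $\theta_{i,\ell}$ that, for a sufficiently large weight, dominates the contributions coming from the original gadgets and forces $\theta_{i,\ell}=0$ at any KKT point, thereby reducing the $d$-dimensional problem to the $d=1$ case.

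\paragraph{Main obstacle.}
The crux is to simultaneously (a) realize every monomial of an arbitrary symmetric quadratic with non-negative triplet weights, even when $Q_{ij}$ and $b_i$ can be negative, and (b) cancel the spurious monomials introduced by each gadget in a consistent order, without re-introducing further new monomials. Part (a) is handled by pairing mirror-image gadgets of opposite sign and picking the one matching the coefficient's sign, while (b) is handled by the three-pass accounting above, which processes bilinears first, then squares, then linears, so that each pass only touches monomials left untouched by the previous ones. A secondary concern is that the piecewise-polynomial $\max\{\cdot,0\}$ operators could introduce spurious first-order stationary points at their kinks; this is ruled out uniformly by the constant margin $\alpha=3$, which keeps every inner argument strictly positive throughout $[0,1]^n$.
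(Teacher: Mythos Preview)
Your $d=1$ argument is correct and follows the same route as the paper: reduce from \textsc{QuadraticProgram-KKT}, choose the margin large enough so that every $\max\{\cdot,0\}$ is strictly active on $[0,1]^n$, and then realize an arbitrary symmetric quadratic as a weighted sum of triplet-loss terms. Your three-pass bookkeeping with six mirror-image gadgets is a slightly cleaner variant of the paper's twelve-triplet-plus-duals construction (which solves a linear system for the weights), but the core idea is identical.

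Your $d\ge 2$ step, however, has a genuine gap. The problem is defined with exactly two fixed pivots $A,B$; the embedding $f$ is only over $V$, and the input set is $V\cup\{A,B\}$. You introduce $d-1$ additional fixed pivots $C_2,\dots,C_d$ at $\bm{e}_2,\dots,\bm{e}_d$ to drive the non-first coordinates to zero, but the problem does not permit this, and adding the $C_\ell$ to $V$ instead would make their positions variable, breaking your coordinate-pinning argument. The paper avoids extra pivots altogether: it places the two pivots at $\vec{0}$ and $\tfrac{1}{2}\vec{1}$ and sets $\alpha=d$. Because both pivots are ``diagonal,'' every triplet-loss term $\|\va_i-\va_j\|_2^2=\sum_\ell (a_{i,\ell}-a_{j,\ell})^2$ decomposes coordinate-wise, so the total loss on $[0,1]^{dn}$ is a sum of $d$ independent copies of the one-dimensional loss. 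Any first-order stationary point in $[0,1]^{dn}$ then projects, along any single coordinate (say the first), to a KKT point of the target $n$-variable quadratic. You should replace your pivot-adding argument with this decoupling trick (or some other two-pivot construction) to close the gap.
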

\begin{proof}[Proof Sketch] We present the key idea for the case $d=1$ with two pivot points $A = 0$ and $B = 1/2$. The full proof is in \Cref{sec:omitted_proof_contrastive}. 
We give a $\CLS$-reduction from \textsc{QuadraticProgram-KKT}. The key idea is that we can use triplets of the form $(x, y, z)$ to simulate the quadratic function. For now, imagine we had as a representation function $f(\cdot)$ the identity function. Then, a given triplet constraint $(x, y, z)$ with weight $w$ will contribute the following term to the overall Triplet Loss:
\begin{align*}
    \mathcal{L}(x,y,z) = w \cdot \max\{(x - y)^2 - (x - z)^2 + \alpha, 0\}
\end{align*}
Our proof first breaks the given quadratic program into smaller quadratics on three variables at a time:
\[
q(x, y, z) \coloneqq c_1 x^2 + c_2 y^2 + c_3 z^2 + c_4 xy + c_5 xz + c_6 yz + c_7 x + c_8 y + c_9 z
\]
We then show how to generate a set of $m$ carefully chosen triplets $\{(x_i, y_i, z_i)\}_{i=1}^m$ with appropriate weights $w_i$ so that the total triplet loss $\mathcal{L}$ is the same as the quadratic program. It is easy to see that for terms depending only on one variable such as $c_1x^2$ or $c_8 y$, we can easily generate them by using a triplet (for example, triplet $(0, x, \frac{1}{2})$ allows us to generate the quadratic term $x^2$ and triplet $(y, 0, \frac{1}{2})$ generates the linear term $y$). However, the main obstacle is that  for the cross-terms like $xy$ we need to use triplet $(x,0,y)$, and this introduces dependencies among the different triplets since there are shared variables. To overcome the difficulty of interacting terms, we need to introduce a total of $12$ contrastive triplets on $x,y,z$, whose corresponding weights depend on the coefficients of the given quadratic. By solving a linear system for the weights, we obtain a loss $\mathcal{L}(x,y,z)$ that equals $q(x, y, z)$.

Using these ideas, we can prove hardness under the framework of contrastive learning ~\citep{schroff2015facenet}, even for the case where the representation $f(\cdot)$ is a linear model parameterized by $\vtheta \in [0, 1]^n$ such that $f_{\vtheta}(\vx) = \vtheta^\top \vx$. Here $\vx$ is the sampled input to the linear model, $f(\vx)$ is the output of the linear model. For any input $\vx_i$, denote $a_i(\vtheta) = \vtheta^\top \vx_i$ to be the output of the linear model. We aim to find local solutions of the triplet loss problem:
\begin{align*}
    \min_{\vtheta \in [0, 1]^n} \sum_{(i, j, k)} w_{i, j, k} \max \left((a_i(\vtheta) - a_j(\vtheta))^2 - (a_i(\vtheta) - a_k(\vtheta))^2 + \alpha, 0 \right).
\end{align*}
We set $\alpha = 1$ and let $\vx_i = \ve_i$ to be the $i^{th}$ standard basis, then $a_i(\vtheta) = \theta_i \in [0, 1]$. The result follows from the case with $d = 1$.
\end{proof}

\section{Experimental verification: hard examples for local search}\label{sec:bad-examples}

% \yl{Note: This result is actually not reproducible because there is a typo in the degree 4 paper. To reproduce it, readers must first discover and fix the bug like we did.}

Our theoretical results establish worst-case hardness for finding local optima in several contrastive objectives. Here, we provide an experimental illustration that this indeed can happen by constructing instances where local search takes exponential time. By using our reductions from \textsc{LocalMaxCut}, we create hard instances for various other contrastive problems, namely \textsc{LocalContrastive-Euclidean}, \textsc{LocalContrastive-Tree} and \textsc{LocalBetweenness-Euclidean}, and we measure the runtime of local search.

Our starting point is a hard instance for the~\textsc{LocalMaxCut} problem due to~\cite{monien2010power}. This instance is a bounded-degree graph with maximum vertex degree 4, where any flip-based local search algorithm (initialized from a specific initial cut) will require exponentially many iterations to reach a local optimum. Flip-based local search algorithms attempt to move one vertex at a time from its current position to another position, while trying to improve the objective. Our reductions transform this instance to the various contrastive objectives mentioned previously. In fact, our reductions when applied to the hard instance of~\textsc{LocalMaxCut} preserve the local search structure and the changes in the objectives exactly. As we verify, the same exponentially-slow path towards a local optimum exists in all of the transformed instances. In particular, to validate our findings, we implemented local search for \textsc{LocalMaxCut}, as well as for the three problems \textsc{LocalContrastive-Euclidean}, \textsc{LocalBetweenness-Euclidean}, and \textsc{LocalContrastive-Tree}.  Interestingly, in all four cases, we observed exactly the same local search dynamics (i.e., the same improving moves were performed). As a consequence the iteration counts on the corresponding instances are identical. We summarize the results in~\Cref{fig:vertices-vs-iterations} and provide the detailed statistics in \cref{app:experiment-table}.

\begin{figure}[ht]
    \centering
    \includegraphics[width=0.55\linewidth]{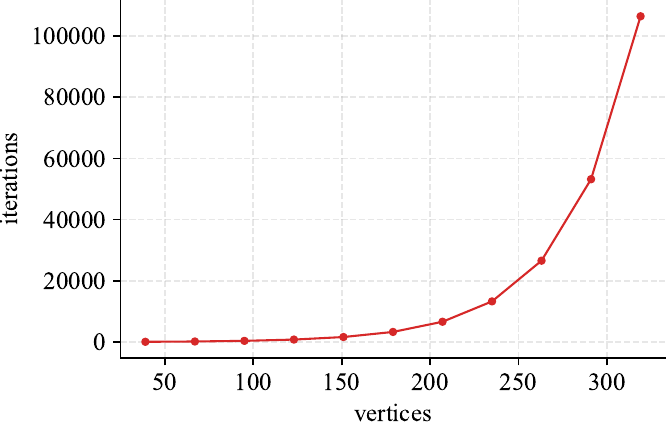}
    \caption{Comparison of instance sizes and number of iterations of local search for the sequence of hard instances created by our reductions.}
    \label{fig:vertices-vs-iterations}
\end{figure}

% \yl{say sth about random init}

% yl{all local search have the same length : greedy/random/deterministic pivoting rule won't matter}

% More importantly, our work provides alternative perspectives on this exponential phenomenon: we reinterpret it not only as a combinatorial optimization problem on cuts, but also as a ranking problem (\textsc{Betweenness-1D}, \textsc{Contrastive-1D}),
% a geometric embedding problem (\textsc{Betweenness-High-Dimension}, \textsc{Contrastive-High-Dimension}), and a tree learning problem (\textsc{Triplet}, \textsc{Quartet}). This unifies seemingly different models under a shared complexity barrier.

Finally, we observed that if we start from a random configuration, local search usually converges quickly, suggesting that the hard instances of~\citet{monien2010power} are sensitive to the choice of the starting point. However, understanding whether random initialization (or other strategies) is generally sufficient to avoid such worst-case dynamics remains an open question.

% \yl{it's still too long, please let me know which part you think is less important.}
\vspace{-0.2cm}
\section*{Conclusion}
In this work, we provided strong evidence that computing local optima for various contrastive learning objectives is computationally intractable in the worst case. To do so we relied on the well-studied complexity classes $\PLS$ and $\CLS$ and we presented a series of reductions from difficult problems of those classes to our contrastive objectives (even for relatively simple settings). There are many exciting questions for future consideration. The main interesting future direction is to examine whether our negative results are persistent in the average case. In particular, understanding the effects of initialization is an exciting direction. Another interesting perspective to study for contrastive objectives, is so-called smoothed analysis \citep{spielman2009smoothed}, where the input data are slightly perturbed by random noise, and many algorithms (including linear programming with the Simplex, which is a local search algorithm) seem to get much better guarantees compared to worst-case inputs. Finally, our work studied the question of how fast can we reach a local optimum, but an important future direction is to argue about the \textit{quality} of local solutions. %Typically, local solutions are good enough for ML paradigms. Is this true for the objectives of this paper?

%-what is not in PLS? what is NOT a local search algo?
%-say it is not about quality of local optima which is also a conjecture how good they are?
%-limitations: weights, size of neighborhoods, reference points in the continuous case what are they? worst-case not real-world data case...interesting for future

\section*{Acknowledgements}
Ioannis Panageas and Jingming Yan  were supported by NSF grant CCF-2454115. Vaggos Chatziafratis is supported by a Hellman Fellowship and a startup grant at UC Santa Cruz. This work has been partially supported by project MIS 5154714 of the National Recovery and Resilience
Plan Greece 2.0 funded by the European Union under the NextGenerationEU Program.
Part of this work was done while J.Y., Y.L., V.C., and I.P. were visiting Archimedes AI Research Center, Athens, Greece. 
\bibliography{main.bib}

\begin{thebibliography}{63}
\providecommand{\natexlab}[1]{#1}
\providecommand{\url}[1]{\texttt{#1}}
\expandafter\ifx\csname urlstyle\endcsname\relax
  \providecommand{\doi}[1]{doi: #1}\else
  \providecommand{\doi}{doi: \begingroup \urlstyle{rm}\Url}\fi

\bibitem[Agarwal et~al.(2007)Agarwal, Wills, Cayton, Lanckriet, Kriegman, and Belongie]{agarwal2007generalized}
Sameer Agarwal, Josh Wills, Lawrence Cayton, Gert Lanckriet, David Kriegman, and Serge Belongie.
\newblock Generalized non-metric multidimensional scaling.
\newblock In \emph{Artificial Intelligence and Statistics}, pages 11--18. PMLR, 2007.

\bibitem[Aho et~al.(1981)Aho, Sagiv, Szymanski, and Ullman]{aho1981inferring}
Alfred~V. Aho, Yehoshua Sagiv, Thomas~G. Szymanski, and Jeffrey~D. Ullman.
\newblock Inferring a tree from lowest common ancestors with an application to the optimization of relational expressions.
\newblock \emph{SIAM Journal on Computing}, 10\penalty0 (3):\penalty0 405--421, 1981.

\bibitem[Ahuja et~al.(2002)Ahuja, Ergun, Orlin, and Punnen]{ahuja2002survey}
Ravindra~K Ahuja, {\"O}zlem Ergun, James~B Orlin, and Abraham~P Punnen.
\newblock A survey of very large-scale neighborhood search techniques.
\newblock \emph{Discrete Applied Mathematics}, 123\penalty0 (1-3):\penalty0 75--102, 2002.

\bibitem[Alon et~al.(2008)Alon, B{\u{a}}doiu, Demaine, Farach-Colton, Hajiaghayi, and Sidiropoulos]{alon2008ordinal}
Noga Alon, Mihai B{\u{a}}doiu, Erik~D Demaine, Martin Farach-Colton, MohammadTaghi Hajiaghayi, and Anastasios Sidiropoulos.
\newblock Ordinal embeddings of minimum relaxation: general properties, trees, and ultrametrics.
\newblock \emph{ACM Transactions on Algorithms (TALG)}, 4\penalty0 (4):\penalty0 1--21, 2008.

\bibitem[Alon et~al.(2023)Alon, Avdiukhin, Elboim, Fischer, and Yaroslavtsev]{alon2023optimal}
Noga Alon, Dmitrii Avdiukhin, Dor Elboim, Orr Fischer, and Grigory Yaroslavtsev.
\newblock Optimal sample complexity of contrastive learning.
\newblock \emph{arXiv preprint arXiv:2312.00379}, 2023.

\bibitem[Anagnostides et~al.(2023)Anagnostides, Kalogiannis, Panageas, Vlatakis{-}Gkaragkounis, and McAleer]{Anagnostides23:Algorithms}
Ioannis Anagnostides, Fivos Kalogiannis, Ioannis Panageas, Emmanouil{-}Vasileios Vlatakis{-}Gkaragkounis, and Stephen McAleer.
\newblock Algorithms and complexity for computing nash equilibria in adversarial team games.
\newblock In \emph{Conference on Economics and Computation (EC)}, 2023.

\bibitem[Anagnostides et~al.(2025)Anagnostides, Panageas, Sandholm, and Yan]{anagnostides2025complexitysymmetricequilibriaminmax}
Ioannis Anagnostides, Ioannis Panageas, Tuomas Sandholm, and Jingming Yan.
\newblock The complexity of symmetric equilibria in min-max optimization and team zero-sum games, 2025.
\newblock URL \url{https://arxiv.org/abs/2502.08519}.

\bibitem[Arora et~al.(1995)Arora, Karger, and Karpinski]{arora1995polynomial}
Sanjeev Arora, David Karger, and Marek Karpinski.
\newblock Polynomial time approximation schemes for dense instances of np-hard problems.
\newblock In \emph{Proceedings of the twenty-seventh annual ACM symposium on Theory of computing}, pages 284--293, 1995.

\bibitem[Arora et~al.(2002)Arora, Frieze, and Kaplan]{arora2002new}
Sanjeev Arora, Alan Frieze, and Haim Kaplan.
\newblock A new rounding procedure for the assignment problem with applications to dense graph arrangement problems.
\newblock \emph{Mathematical programming}, 92\penalty0 (1):\penalty0 1--36, 2002.

\bibitem[Austrin et~al.(2015)Austrin, Manokaran, and Wenner]{austrin2015np}
Per Austrin, Rajsekar Manokaran, and Cenny Wenner.
\newblock On the np-hardness of approximating ordering-constraint satisfaction problems.
\newblock \emph{Theory of Computing}, 11\penalty0 (1):\penalty0 257--283, 2015.

\bibitem[Awasthi et~al.(2022)Awasthi, Dikkala, and Kamath]{awasthi2022more}
Pranjal Awasthi, Nishanth Dikkala, and Pritish Kamath.
\newblock Do more negative samples necessarily hurt in contrastive learning?
\newblock In \emph{International conference on machine learning}, pages 1101--1116. PMLR, 2022.

\bibitem[Babichenko and Rubinstein(2021)]{Babichenko21:Settling}
Yakov Babichenko and Aviad Rubinstein.
\newblock Settling the complexity of nash equilibrium in congestion games.
\newblock In Samir Khuller and Virginia~Vassilevska Williams, editors, \emph{Symposium on Theory of Computing (STOC)}, 2021.

\bibitem[B{\u{a}}doiu et~al.(2008)B{\u{a}}doiu, Demaine, Hajiaghayi, Sidiropoulos, and Zadimoghaddam]{buadoiu2008ordinal}
Mihai B{\u{a}}doiu, Erik~D Demaine, MohammadTaghi Hajiaghayi, Anastasios Sidiropoulos, and Morteza Zadimoghaddam.
\newblock Ordinal embedding: Approximation algorithms and dimensionality reduction.
\newblock In \emph{Approximation, Randomization and Combinatorial Optimization. Algorithms and Techniques}, pages 21--34. Springer, 2008.

\bibitem[Bilu and Linial(2005)]{bilu2005monotone}
Yonatan Bilu and Nati Linial.
\newblock Monotone maps, sphericity and bounded second eigenvalue.
\newblock \emph{Journal of Combinatorial Theory, Series B}, 95\penalty0 (2):\penalty0 283--299, 2005.

\bibitem[Bitansky and Gerichter(2020)]{bitansky2020cryptographic}
Nir Bitansky and Idan Gerichter.
\newblock On the cryptographic hardness of local search.
\newblock In \emph{11th Innovations in Theoretical Computer Science Conference (ITCS 2020)}, pages 6--1. Schloss Dagstuhl--Leibniz-Zentrum f{\"u}r Informatik, 2020.

\bibitem[Bodirsky et~al.(2017)Bodirsky, Jonsson, and Pham]{bodirsky2017complexity}
Manuel Bodirsky, Peter Jonsson, and Trung~Van Pham.
\newblock The complexity of phylogeny constraint satisfaction problems.
\newblock \emph{ACM Transactions on Computational Logic (TOCL)}, 18\penalty0 (3):\penalty0 1--42, 2017.

\bibitem[Borg and Groenen(2007)]{borg2007modern}
Ingwer Borg and Patrick~JF Groenen.
\newblock \emph{Modern multidimensional scaling: Theory and applications}.
\newblock Springer Science \& Business Media, 2007.

\bibitem[Byrka et~al.(2010)Byrka, Guillemot, and Jansson]{byrka2010new}
Jaroslaw Byrka, Sylvain Guillemot, and Jesper Jansson.
\newblock New results on optimizing rooted triplets consistency.
\newblock \emph{Discrete Applied Mathematics}, 158\penalty0 (11):\penalty0 1136--1147, 2010.

\bibitem[Charikar et~al.(2009)Charikar, Guruswami, and Manokaran]{charikar2009every}
Moses Charikar, Venkatesan Guruswami, and Rajsekar Manokaran.
\newblock Every permutation csp of arity 3 is approximation resistant.
\newblock In \emph{2009 24th Annual IEEE Conference on Computational Complexity}, pages 62--73. IEEE, 2009.

\bibitem[Chatziafratis and Indyk(2024)]{chatziafratis2024dimension}
Vaggos Chatziafratis and Piotr Indyk.
\newblock Dimension-accuracy tradeoffs in contrastive embeddings for triplets, terminals \& top-k nearest neighbors.
\newblock In \emph{2024 Symposium on Simplicity in Algorithms (SOSA)}, pages 230--243. SIAM, 2024.

\bibitem[Chatziafratis and Makarychev(2023)]{chatziafratis2023triplet}
Vaggos Chatziafratis and Konstantin Makarychev.
\newblock Triplet reconstruction and all other phylogenetic csps are approximation resistant.
\newblock In \emph{2023 IEEE 64th Annual Symposium on Foundations of Computer Science (FOCS)}, pages 253--284. IEEE, 2023.

\bibitem[Chatziafratis et~al.(2018)Chatziafratis, Niazadeh, and Charikar]{chatziafratis2018hierarchical}
Vaggos Chatziafratis, Rad Niazadeh, and Moses Charikar.
\newblock Hierarchical clustering with structural constraints.
\newblock In \emph{International conference on machine learning}, pages 774--783. PMLR, 2018.

\bibitem[Chen et~al.(2022)Chen, Gong, Li, Yang, Niu, and Sugiyama]{chen2022learning}
Shuo Chen, Chen Gong, Jun Li, Jian Yang, Gang Niu, and Masashi Sugiyama.
\newblock Learning contrastive embedding in low-dimensional space.
\newblock \emph{Advances in Neural Information Processing Systems}, 35:\penalty0 6345--6357, 2022.

\bibitem[Daskalakis and Papadimitriou(2011)]{daskalakis2011continuous}
Constantinos Daskalakis and Christos Papadimitriou.
\newblock Continuous local search.
\newblock In \emph{Proceedings of the twenty-second annual ACM-SIAM symposium on Discrete Algorithms}, pages 790--804. SIAM, 2011.

\bibitem[Daskalakis et~al.(2009)Daskalakis, Goldberg, and Papadimitriou]{daskalakis2009complexity}
Constantinos Daskalakis, Paul~W Goldberg, and Christos~H Papadimitriou.
\newblock The complexity of computing a nash equilibrium.
\newblock \emph{Communications of the ACM}, 52\penalty0 (2):\penalty0 89--97, 2009.

\bibitem[Emamjomeh-Zadeh and Kempe(2018)]{emamjomeh2018adaptive}
Ehsan Emamjomeh-Zadeh and David Kempe.
\newblock Adaptive hierarchical clustering using ordinal queries.
\newblock In \emph{Proceedings of the Twenty-Ninth Annual ACM-SIAM Symposium on Discrete Algorithms}, pages 415--429. SIAM, 2018.

\bibitem[Fabrikant et~al.(2004)Fabrikant, Papadimitriou, and Talwar]{fabrikant2004complexity}
Alex Fabrikant, Christos Papadimitriou, and Kunal Talwar.
\newblock The complexity of pure nash equilibria.
\newblock In \emph{Proceedings of the thirty-sixth annual ACM symposium on Theory of computing}, pages 604--612, 2004.

\bibitem[Fan et~al.(2020)Fan, Centurion, Mohammadi, Sgherzi, Sidiropoulos, and Valizadeh]{fan2020learning}
Bohan Fan, Diego~Ihara Centurion, Neshat Mohammadi, Francesco Sgherzi, Anastasios Sidiropoulos, and Mina Valizadeh.
\newblock Learning lines with ordinal constraints.
\newblock \emph{arXiv preprint arXiv:2004.13202}, 2020.

\bibitem[Fearnley et~al.(2023)Fearnley, Goldberg, Hollender, and Savani]{Fearnley23:Complexity}
John Fearnley, Paul Goldberg, Alexandros Hollender, and Rahul Savani.
\newblock The complexity of gradient descent: {CLS} = {PPAD} {\(\cap\)} {PLS}.
\newblock \emph{J. {ACM}}, 70\penalty0 (1):\penalty0 7:1--7:74, 2023.

\bibitem[Fearnley et~al.(2024)Fearnley, Goldberg, Hollender, and Savani]{fearnley2024complexity}
John Fearnley, Paul~W Goldberg, Alexandros Hollender, and Rahul Savani.
\newblock The complexity of computing kkt solutions of quadratic programs.
\newblock In \emph{Proceedings of the 56th Annual ACM Symposium on Theory of Computing}, pages 892--903, 2024.

\bibitem[Ghosh and Hollender(2024)]{ghosh2024complexitysymmetricbimatrixgames}
Abheek Ghosh and Alexandros Hollender.
\newblock The complexity of symmetric bimatrix games with common payoffs.
\newblock In \emph{International Workshop On Internet And Network Economics (WINE)}, 2024.

\bibitem[Ghosh et~al.(2019)Ghosh, Chen, and Yue]{ghosh2019landmark}
Nikhil Ghosh, Yuxin Chen, and Yisong Yue.
\newblock Landmark ordinal embedding.
\newblock \emph{Advances in Neural Information Processing Systems}, 32, 2019.

\bibitem[Guruswami et~al.(2008)Guruswami, Manokaran, and Raghavendra]{guruswami2008beating}
Venkatesan Guruswami, Rajsekar Manokaran, and Prasad Raghavendra.
\newblock Beating the random ordering is hard: Inapproximability of maximum acyclic subgraph.
\newblock In \emph{2008 49th Annual IEEE Symposium on Foundations of Computer Science}, pages 573--582. IEEE, 2008.

\bibitem[Guruswami et~al.(2011)Guruswami, H{\aa}stad, Manokaran, Raghavendra, and Charikar]{GHMRC11}
Venkatesan Guruswami, Johan H{\aa}stad, Rajsekar Manokaran, Prasad Raghavendra, and Moses Charikar.
\newblock Beating the random ordering is hard: Every ordering csp is approximation resistant.
\newblock \emph{SIAM Journal on Computing}, 40\penalty0 (3):\penalty0 878--914, 2011.

\bibitem[HaoChen and Ma(2022)]{haochen2022theoretical}
Jeff~Z HaoChen and Tengyu Ma.
\newblock A theoretical study of inductive biases in contrastive learning.
\newblock \emph{arXiv preprint arXiv:2211.14699}, 2022.

\bibitem[Hjelm et~al.(2018)Hjelm, Fedorov, Lavoie-Marchildon, Grewal, Bachman, Trischler, and Bengio]{hjelm2018learning}
R~Devon Hjelm, Alex Fedorov, Samuel Lavoie-Marchildon, Karan Grewal, Phil Bachman, Adam Trischler, and Yoshua Bengio.
\newblock Learning deep representations by mutual information estimation and maximization.
\newblock \emph{arXiv preprint arXiv:1808.06670}, 2018.

\bibitem[Hub{\'a}cek and Yogev(2020)]{hubacek2020hardness}
Pavel Hub{\'a}cek and Eylon Yogev.
\newblock Hardness of continuous local search: Query complexity and cryptographic lower bounds.
\newblock \emph{SIAM Journal on Computing}, 49\penalty0 (6):\penalty0 1128--1172, 2020.

\bibitem[Indyk et~al.(2017)Indyk, Matou{\v{s}}ek, and Sidiropoulos]{indyk20178}
Piotr Indyk, Ji{\v{r}}{\'\i} Matou{\v{s}}ek, and Anastasios Sidiropoulos.
\newblock 8: low-distortion embeddings of finite metric spaces.
\newblock In \emph{Handbook of discrete and computational geometry}, pages 211--231. Chapman and Hall/CRC, 2017.

\bibitem[Jain et~al.(2016)Jain, Jamieson, and Nowak]{jain2016finite}
Lalit Jain, Kevin~G Jamieson, and Rob Nowak.
\newblock Finite sample prediction and recovery bounds for ordinal embedding.
\newblock \emph{Advances in neural information processing systems}, 29, 2016.

\bibitem[Jamieson and Nowak(2011)]{jamieson2011low}
Kevin~G Jamieson and Robert~D Nowak.
\newblock Low-dimensional embedding using adaptively selected ordinal data.
\newblock In \emph{2011 49th Annual Allerton Conference on Communication, Control, and Computing (Allerton)}, pages 1077--1084. IEEE, 2011.

\bibitem[Johnson et~al.(1988)Johnson, Papadimitriou, and Yannakakis]{johnson1988easy}
David~S Johnson, Christos~H Papadimitriou, and Mihalis Yannakakis.
\newblock How easy is local search?
\newblock \emph{Journal of computer and system sciences}, 37\penalty0 (1):\penalty0 79--100, 1988.

\bibitem[Kalantidis et~al.(2020)Kalantidis, Sariyildiz, Pion, Weinzaepfel, and Larlus]{kalantidis2020hard}
Yannis Kalantidis, Mert~Bulent Sariyildiz, Noe Pion, Philippe Weinzaepfel, and Diane Larlus.
\newblock Hard negative mixing for contrastive learning.
\newblock \emph{Advances in neural information processing systems}, 33:\penalty0 21798--21809, 2020.

\bibitem[Kleindessner and von Luxburg(2017)]{kleindessner2017kernel}
Matth{\"a}us Kleindessner and Ulrike von Luxburg.
\newblock Kernel functions based on triplet comparisons.
\newblock \emph{Advances in neural information processing systems}, 30, 2017.

\bibitem[Kruskal(1964{\natexlab{a}})]{kruskal1964multidimensional}
Joseph~B Kruskal.
\newblock Multidimensional scaling by optimizing goodness of fit to a nonmetric hypothesis.
\newblock \emph{Psychometrika}, 29\penalty0 (1):\penalty0 1--27, 1964{\natexlab{a}}.

\bibitem[Kruskal(1964{\natexlab{b}})]{kruskal1964nonmetric}
Joseph~B Kruskal.
\newblock Nonmetric multidimensional scaling: a numerical method.
\newblock \emph{Psychometrika}, 29\penalty0 (2):\penalty0 115--129, 1964{\natexlab{b}}.

\bibitem[Lawler(1985)]{lawler1985traveling}
Eugene~L Lawler.
\newblock The traveling salesman problem: a guided tour of combinatorial optimization.
\newblock \emph{Wiley-Interscience Series in Discrete Mathematics}, 1985.

\bibitem[Monien and Tscheuschner(2010)]{monien2010power}
Burkhard Monien and Tobias Tscheuschner.
\newblock On the power of nodes of degree four in the local max-cut problem.
\newblock In \emph{Algorithms and Complexity: 7th International Conference, CIAC 2010, Rome, Italy, May 26-28, 2010. Proceedings 7}, pages 264--275. Springer, 2010.

\bibitem[Oord et~al.(2018)Oord, Li, and Vinyals]{oord2018representation}
Aaron van~den Oord, Yazhe Li, and Oriol Vinyals.
\newblock Representation learning with contrastive predictive coding.
\newblock \emph{arXiv preprint arXiv:1807.03748}, 2018.

\bibitem[Orlin et~al.(2004)Orlin, Punnen, and Schulz]{orlin2004approximate}
James~B Orlin, Abraham~P Punnen, and Andreas~S Schulz.
\newblock Approximate local search in combinatorial optimization.
\newblock \emph{SIAM Journal on Computing}, 33\penalty0 (5):\penalty0 1201--1214, 2004.

\bibitem[Robinson et~al.(2020)Robinson, Chuang, Sra, and Jegelka]{robinson2020contrastive}
Joshua Robinson, Ching-Yao Chuang, Suvrit Sra, and Stefanie Jegelka.
\newblock Contrastive learning with hard negative samples.
\newblock \emph{arXiv preprint arXiv:2010.04592}, 2020.

\bibitem[Saunshi et~al.(2019)Saunshi, Plevrakis, Arora, Khodak, and Khandeparkar]{saunshi2019theoretical}
Nikunj Saunshi, Orestis Plevrakis, Sanjeev Arora, Mikhail Khodak, and Hrishikesh Khandeparkar.
\newblock A theoretical analysis of contrastive unsupervised representation learning.
\newblock In \emph{International Conference on Machine Learning}, pages 5628--5637. PMLR, 2019.

\bibitem[Saunshi et~al.(2022)Saunshi, Ash, Goel, Misra, Zhang, Arora, Kakade, and Krishnamurthy]{saunshi2022understanding}
Nikunj Saunshi, Jordan Ash, Surbhi Goel, Dipendra Misra, Cyril Zhang, Sanjeev Arora, Sham Kakade, and Akshay Krishnamurthy.
\newblock Understanding contrastive learning requires incorporating inductive biases.
\newblock In \emph{International Conference on Machine Learning}, pages 19250--19286. PMLR, 2022.

\bibitem[Sch{\"a}ffer and Yannakakis(1991)]{schaffer1991simple}
Alejandro~A Sch{\"a}ffer and Mihalis Yannakakis.
\newblock Simple local search problems that are hard to solve.
\newblock \emph{SIAM Journal on Computing}, 20\penalty0 (1):\penalty0 56--87, 1991.

\bibitem[Schroff et~al.(2015)Schroff, Kalenichenko, and Philbin]{schroff2015facenet}
Florian Schroff, Dmitry Kalenichenko, and James Philbin.
\newblock Facenet: A unified embedding for face recognition and clustering.
\newblock In \emph{Proceedings of the IEEE conference on computer vision and pattern recognition}, pages 815--823, 2015.

\bibitem[Spielman and Teng(2009)]{spielman2009smoothed}
Daniel~A Spielman and Shang-Hua Teng.
\newblock Smoothed analysis: an attempt to explain the behavior of algorithms in practice.
\newblock \emph{Communications of the ACM}, 52\penalty0 (10):\penalty0 76--84, 2009.

\bibitem[Tamuz et~al.(2011)Tamuz, Liu, Belongie, Shamir, and Kalai]{tamuz2011adaptively}
Omer Tamuz, Ce~Liu, Serge Belongie, Ohad Shamir, and Adam~Tauman Kalai.
\newblock Adaptively learning the crowd kernel.
\newblock \emph{28th International Conference on Machine Learning (ICML)}, 2011.

\bibitem[Terada and Luxburg(2014)]{terada2014local}
Yoshikazu Terada and Ulrike Luxburg.
\newblock Local ordinal embedding.
\newblock In \emph{International Conference on Machine Learning}, pages 847--855. PMLR, 2014.

\bibitem[Thurstone(1954)]{thurstone1954measurement}
Louis~Leon Thurstone.
\newblock The measurement of values.
\newblock \emph{Psychological review}, 61\penalty0 (1):\penalty0 47, 1954.

\bibitem[Torgerson(1952)]{torgerson1952multidimensional}
Warren~S Torgerson.
\newblock Multidimensional scaling: I. theory and method.
\newblock \emph{Psychometrika}, 17\penalty0 (4):\penalty0 401--419, 1952.

\bibitem[Tosh et~al.(2021)Tosh, Krishnamurthy, and Hsu]{tosh2021contrastive}
Christopher Tosh, Akshay Krishnamurthy, and Daniel Hsu.
\newblock Contrastive learning, multi-view redundancy, and linear models.
\newblock In \emph{Algorithmic Learning Theory}, pages 1179--1206. PMLR, 2021.

\bibitem[Van Der~Maaten and Weinberger(2012)]{van2012stochastic}
Laurens Van Der~Maaten and Kilian Weinberger.
\newblock Stochastic triplet embedding.
\newblock In \emph{2012 IEEE International Workshop on Machine Learning for Signal Processing}, pages 1--6. IEEE, 2012.

\bibitem[Vankadara et~al.(2019)Vankadara, Haghiri, Lohaus, Wahab, and von Luxburg]{vankadara2019insights}
Leena~Chennuru Vankadara, Siavash Haghiri, Michael Lohaus, Faiz~Ul Wahab, and Ulrike von Luxburg.
\newblock Insights into ordinal embedding algorithms: A systematic evaluation.
\newblock \emph{arXiv preprint arXiv:1912.01666}, 2019.

\bibitem[Vikram and Dasgupta(2016)]{vikram2016interactive}
Sharad Vikram and Sanjoy Dasgupta.
\newblock Interactive bayesian hierarchical clustering.
\newblock In \emph{International Conference on Machine Learning}, pages 2081--2090. PMLR, 2016.

\end{thebibliography}

\newpage

\appendix

\section{Omitted proofs for Section~\ref{sec:hardness-dd}}\label{app:betweenness-high}

\begin{theorem}[\cref{thm:local-betweenness-dd}, restated]
For every fixed dimension $d\ge 1$, \textsc{LocalBetweenness-Euclidean} is $\PLS$-hard. 
\end{theorem}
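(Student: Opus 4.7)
My plan is to generalize the $d = 2$ gadget above by using $d$ special vertices $X_1, \ldots, X_d$ in place of two, together with a three-layer weight hierarchy $M_2 \gg M_1 \gg W \coloneqq \sum_e w_e$. Keeping Type~A as the edge constraints $(u, X_1, v)$ with weight $w_{uv}$, I would introduce two new layers. Type~B (Equidistance): for each $v \in V$ and each pair $i \ne j$, add $(X_i, X_j, v)$ with weight $M_1 \coloneqq W + 1$, so that at any local optimum each $v$ is equidistant from all the $X_i$'s and at distance at least $\max_{i,j} \|X_i - X_j\|_2$ from each of them. Type~C (Simplex Rigidity): for every triple of distinct indices $\{i,j,k\}$, add all three cyclic orderings $(X_i, X_j, X_k)$, $(X_j, X_k, X_i)$, $(X_k, X_i, X_j)$, each with weight $M_2$ chosen to dominate the total weight of all other constraints combined.

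The analysis would proceed in three layers exploiting $M_2 \gg M_1 \gg W$. First, Type~C forces $X_1, \ldots, X_d$ to form a regular $(d-1)$-simplex in any local optimum, since otherwise some $X_i$ admits a single-vertex move to a symmetric position satisfying strictly more Type~C constraints, with gain $\geq M_2$ dominating every possible loss. Second, given the regular simplex structure, the equidistant locus forced by Type~B is the one-dimensional line $\ell$ through the circumcenter, perpendicular to the simplex's affine span; combined with the lower bound $\|v - X_i\|_2 \geq s$ (where $s$ is the simplex side length), each $v \in V$ is restricted to one of two opposing rays along $\ell$, exactly mirroring the $d = 2$ picture. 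Third, the Type~A trigonometric argument extends by direct computation: with circumradius $r = s\sqrt{(d-1)/(2d)}$ and ray-starting distance $t_0 = s\sqrt{(d+1)/(2d)}$, parametrizing $v$ by a signed coordinate on $\ell$, one verifies $\|u - v\|_2^2 < \|u - X_1\|_2^2$ whenever $u, v$ share a ray (so $(u, X_1, v)$ is unsatisfied), and $\|u - v\|_2^2 > \|u - X_1\|_2^2$ when they lie on opposite rays (so it is satisfied). A cut $(S, \bar S)$ of $G$ is then read off from ray membership, and single-vertex moves of some $v$ in the embedding correspond bijectively to single-vertex flips in \textsc{LocalMaxCut}, closing the $\PLS$-reduction.

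The main obstacle I anticipate is Step~1. Unlike the Type~B argument, which can be discharged vertex-by-vertex by moving a single $v$ onto the rays, Type~C constraints couple all the $X_i$'s together, so the ``improving move'' argument is subtle. I expect to handle this via a potential argument: at any configuration violating some Type~C constraint, identify the $X_i$ whose position deviates the most from the canonical symmetric position determined by $\{X_j\}_{j \ne i}$, and argue that a reflection or recentering of $X_i$ satisfies at least one additional Type~C constraint; since $M_2$ exceeds the sum of all other constraint weights, this yields a strict improvement, contradicting local optimality. The $d = 1$ case does not fit this template, because the equidistance locus of two special points in $\mathbb{R}^1$ is a single point and cannot satisfy the distance lower bound. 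That case must be handled by a separate construction in the spirit of the proof of \Cref{thm:local-contrastive-1d}, using two endpoint anchors and an interior ``cut'' point on the line to separate the two sides of the partition.
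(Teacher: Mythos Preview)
Your overall architecture matches the paper's: $d$ special vertices $X_1,\dots,X_d$, Type~A edge constraints $(u,X_1,v)$, Type~B isosceles constraints forcing every $v$ onto two opposite rays through the simplex centroid, and Type~C equilateral constraints forcing the $X_i$ into a regular $(d{-}1)$-simplex. The trigonometric verification in Step~3 and the cut-recovery argument are essentially identical to the paper's (the paper carries them out explicitly with the parametrization $t_0 = \ell\sqrt{(d+1)/(2d)}$ and circumradius $\ell\sqrt{(d-1)/(2d)}$, exactly as you propose).

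The gap is precisely where you anticipate it, and your proposed patch does not close it. With a \emph{uniform} weight $M_2$ on all Type~C constraints, moving a single $X_i$ can simultaneously fix some cyclic constraints and break others of the \emph{same} weight, so the domination argument ``gain $\geq M_2$ exceeds all lower-layer losses'' does not apply: the net Type~C change can be zero or negative. Your fallback (``identify the $X_i$ that deviates the most from the canonical symmetric position determined by $\{X_j\}_{j\neq i}$'') is circular, because when the other $X_j$'s do not yet form a regular simplex there \emph{is} no canonical symmetric position to deviate from. You would need to prove that any non-regular configuration of $d$ points admits a single-vertex move strictly increasing the number of satisfied cyclic equilateral constraints --- a nontrivial combinatorial statement you have not established.

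The paper sidesteps this entirely with a \emph{layered} Type~C hierarchy: the three cyclic constraints on $\{X_j,X_k,X_\ell\}$ with $j<k<\ell$ all receive weight $M_\ell$, and one chooses $M_3 > M_4 > \cdots > M_d > M$ so that each $M_k$ dominates the total weight of all Type~C constraints with larger maximal index plus all Type~A and~B constraints. This supports a clean induction on $k$: for the base case, if $\{X_1,X_2,X_3\}$ is not equilateral, moving one of them to make it so gains at least $M_3$ and everything that could break weighs strictly less; for the inductive step, once $\{X_1,\dots,X_{k-1}\}$ is already regular, moving $X_k$ alone to complete the $(k{-}1)$-simplex gains at least one constraint of weight $M_k$ while all affected lighter constraints sum to less than $M_k$. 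No ``which vertex deviates the most'' reasoning is needed.

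On $d=1$: the paper's construction degenerates to a single anchor $X_1$ with only the Type~A edge constraints, and the paper simply observes that any local optimum of this instance still encodes a local max cut (the side of $X_1$ a vertex lands on determines its side of the cut, and local optimality of the embedding is a stronger condition than local optimality of the cut). Your separate $d=1$ gadget would also work but is more elaborate than necessary.
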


% unfortunately we didn't include d=1 case. The construction is correct, but the proof does not follow this.

% I think we are good, no need to resubmit. if the reviewer argues, we can explain in the rebuttal.

\begin{proof}
We reduce from \textsc{LocalMaxCut}.
Let $G = (V, E, w)$ be a weighted undirected graph.

\textbf{Reduction Construction.}
We define an instance of \textsc{LocalBetweenness-Euclidean} in $\mathbb{R}^d$ called $\calI_{btw}$ as follows. We introduce special vertices $X_1,\dots,X_d$ and choose a hierarchy of weights
\[
M_3 > M_4 > \cdots > M_d > M > \sum_{(u,v) \in E} w_{uv},
\]
such that for each $k\ge3$,
\[
M_k > \underbrace{\sum_{j>k} \; 3 \tbinom{j-1}{2} M_j}_{\text{lower layer Type C constraints}} 
    + \underbrace{2 |V| \tbinom{d}{2} M}_{\text{all Type B constraints}} 
    + \underbrace{\sum_{(u,v) \in E} w_{uv}}_{\text{all Type A constraints}}.
\]

Now add constraints as follows:

\textbf{Type A (Edge Constraints).}  
For each edge $(u, v) \in E$ with weight $w_{uv}$, add a single constraint $(u, X_1, v)$ with weight $w_{uv}$.

% \jy{Instead of this can we have constraints $(u, a, v)$ where u, a, v are points on the rays and we follow the reduction of 1D? I think this will simply the analysis.}

% \yl{my thoughts: If we let a point $a$ to form a regular simplex with $X_1, ..., X_d$, we can indeed use $(u, a, v)$ as edge constraints. But there is a bug: point $a$ only has two positions to choose from. If these two positions are occupied by other points, we cannot move $a$ to the correct position. In the previous steps, we had infinitely many positions to choose from}

% \yl{now I understand Jingming's idea, very nice}

\textbf{Type B (Isosceles Constraints).}
For each $v \in V$ and every pair $(X_k, X_\ell)$ with $1 \leq k < \ell \leq d$,
add two constraints $(X_k, X_\ell, v)$ and $(X_\ell, X_k, v)$, each with weight $M$. Intuitively, this will force $X_k, X_\ell, v$ to form an \emph{isosceles triangle} such that $\norm{ v - X_k }_2 = \norm{ v - X_\ell }_2 \geq \norm{ X_k - X_\ell }_2$.

\textbf{Type C (Equilateral Constraints).}
For every triple $(X_j, X_k, X_\ell)$ with $1 \leq j < k < \ell \leq d$, we add three constraints $(X_j, X_k, X_\ell)$, $(X_k, X_\ell, X_j)$, $(X_\ell, X_j, X_k)$,
each with weight $M_\ell$ ($\ell$ is the largest index among $j,k,\ell$).
Intuitively, this will force $X_j, X_k, X_\ell$ to form an \emph{equilateral triangle}.

\begin{lemma}\label{lem:btw-typec}
In any local optimum of $\calI_{btw}$, all Type C constraints are satisfied, therefore $X_1, \dots, X_d$ form a $(d-1)$-dimensional regular simplex.\footnote{In geometry, a regular simplex is the $d$-dimensional generalization of an equilateral triangle (2-simplex) or a regular tetrahedron (3-simplex), consisting of $d+1$ points with all pairwise distances equal.}
\end{lemma}

\begin{proof}[Proof of \cref{lem:btw-typec}]
We prove $\{X_1,\dots,X_k\}$ form a regular simplex for each $k=3,\dots,d$. We do this by induction on $k$.

\textbf{Base Case ($k=3$).}  If any Type C constraint over $\{X_1,X_2,X_3\}$ is unsatisfied, moving any of them to form an equilateral triangle with the other two will satisfy at least one new constraint of weight $M_3$, while losing at most
$\sum_{j>3} \; 3 \tbinom{j-1}{2} M_j + 2 |V| \tbinom{d}{2} M + \sum_{(u,v) \in E} w_{uv} < M_3$, which contradicts local optimality.

\textbf{Inductive Step.} Suppose $\{X_1,\dots,X_{k-1}\}$ form a regular simplex but $\{X_1,\dots,X_k\}$ do not. Similar to the above, moving $X_k$ to form a regular simplex with $\{X_1,\dots,X_{k-1}\}$ will satisfy at least one new constraint of weight $M_k$, while losing a total strictly less than $M_k$. Hence, $\{X_1,\dots,X_k\}$ must also form a regular simplex.
\end{proof}

\begin{lemma}\label{lem:btw-typeb}
Let $C$ be the centroid of the simplex $X_1,\dots,X_d$ and $\ell = \norm{X_1 - X_2}_2$ be its edge length. Then in any local optimum, 
all Type B constraints are satisfied, therefore each $v \in V$ must lie on the line through $C$ perpendicular to the hyperplane spanned by the $X_i$\,, and moreover at distance at least $\ell$ from every $X_i$. Formally, each $v \in V$ lies on one of two opposing rays:
\[
R^+ = \bigl\{\, C + t \, \mathbf{u} : t \ge t_0 \bigr\}, \quad
R^- = \bigl\{\, C - t \, \mathbf{u} : t \ge t_0 \bigr\},
\]
where $\mathbf{u}$ is the unit vector along that line and $t_0 = \ell \cdot \sqrt{\frac{d+1}{2d}}$.
\end{lemma}

\begin{proof}[Proof of \cref{lem:btw-typeb}]
Without loss of generality, assume the simplex is in the standard position:
\[
X_1 = (1, 0, \ldots, 0), X_2 = (0, 1, \ldots, 0), \ldots, X_d = (0, 0, \ldots, 1)
\]
with centroid $C = \left(\frac{1}{d}, \frac{1}{d}, \ldots, \frac{1}{d}\right)$. The edge length is $\ell = \sqrt{2}$.

We denote the $i$-th coordinate of $v$ as $v_i$. The equidistant condition $\norm{v - X_k}_2 = \norm{v - X_\ell}_2$ simplifies to:
\[
v_i^2 - 2v_i = v_j^2 - 2v_j \quad \forall i \neq j.
\]
The only solutions are $v = (\alpha, \alpha, \ldots, \alpha)$ for some $\alpha \in \mathbb{R}$.

For $v = \alpha \mathbf{1}$, the distance to any vertex $X_k$ is:
\[
\norm{v - X_k}_2^2 = (\alpha - 1)^2 + (d-1)\alpha^2 = d\alpha^2 - 2\alpha + 1 \geq \ell^2 = 2.
\]
Solving $d\alpha^2 - 2\alpha + 1 \geq 2$ gives:
\[
\alpha \leq \frac{1 - \sqrt{d+1}}{d} \quad \text{or} \quad \alpha \geq \frac{1 + \sqrt{d+1}}{d}.
\]
Expressing $v$ relative to the centroid $C = \frac{1}{d}\mathbf{1}$:
\[
v = C + t \, \mathbf{u}, \quad \mathbf{u} = \mathbf{1}/\sqrt{d}, \quad t = \sqrt{d}\left(\alpha - \frac{1}{d}\right).
\]
Substituting the bounds for $\alpha$ yields $t \leq -\sqrt{\frac{d+1}{d}}$ or $t \geq \sqrt{\frac{d+1}{d}}$.

For arbitrary $\ell > 0$, scale the canonical simplex by $\lambda = \ell/\sqrt{2}$. The threshold $t_0$ scales as:
\[
t_0 = \lambda \sqrt{\frac{d+1}{d}} = \ell \sqrt{\frac{d+1}{2d}}.
\]

The standard basis assumption is justified because any regular simplex can be mapped to this form via a similarity transformation, which preserves the distance relationships.

Since $M > \sum_{(u,v) \in E} w_{uv}$, any configuration of $v$ that violates the Type B constraints can be improved by moving $v$ to either $R^+$ or $R^-$.
\end{proof}

\begin{lemma}\label{lem:btw-typea}
In any local optimum, the edge constraint $(u, X_1, v)$ is satisfied if and only if $u$ and $v$ lie on opposite rays $R^+$ and $R^-$.
\end{lemma}

\begin{proof}[Proof of \cref{lem:btw-typea}]
Let $u = C + t_i\mathbf{u}$ and $v = C + t_j\mathbf{u}$. Since $u, v$ lie on the line through $C$ perpendicular to the hyperplane spanned by the $X_i$, we have
\begin{align*}
    \norm{u - X_1}_2 &= \sqrt{t_i^2 + \norm{X_1 - C}_2^2}, \\
    \norm{v - X_1}_2 &= \sqrt{t_j^2 + \norm{X_1 - C}_2^2}.
\end{align*}

We can also verify that $\norm{X_1 - C}_2 = \ell \sqrt{\frac{d-1}{2d}}$.

\textbf{Case 1:} If $u$ and $v$ are on the same ray (e.g., $R^+$), then:
\[
\norm{u - v}_2 = |t_i - t_j| \leq \max(t_i, t_j) < \max(\norm{u - X_1}_2, \norm{v - X_1}_2).
\]
Thus $(u, X_1, v)$ is not satisfied.

\textbf{Case 2:} If $u$ and $v$ are on opposite rays $R^+$ and $R^-$, then:
\begin{align*}
  \norm{u - X_1}_2 &= \sqrt{t_i^2 + \norm{X_1 - C}_2^2} = \sqrt{t_i^2 + \frac{(d-1)\ell^2}{2d}}, \\
  \norm{v - X_1}_2 &= \sqrt{t_j^2 + \norm{X_1 - C}_2^2} = \sqrt{t_j^2 + \frac{(d-1)\ell^2}{2d}}, \\
  \norm{u - v}_2 &= |t_i| + |t_j|.
\end{align*}

We need to show that $\norm{u - v}_2 \geq \max(\norm{u - X_1}_2, \norm{v - X_1}_2)$.

To show $\norm{u - v}_2 \geq \norm{u - X_1}_2$, we need $(|t_i| + |t_j|)^2 \geq t_i^2 + \frac{(d-1)\ell^2}{2d}$.  
Expanding:
\[
t_i^2 + 2 |t_i t_j| + t_j^2 \geq t_i^2 + \frac{(d-1)\ell^2}{2d} \implies 2 |t_i t_j| + t_j^2 \geq \frac{(d-1)\ell^2}{2d}.
\]
Since $|t_i|, |t_j| \geq t_0$, substitute $|t_i| = |t_j| = t_0$ (worst case for the inequality):
\[
2t_0^2 + t_0^2 = 3t_0^2 \geq \frac{(d-1)\ell^2}{2d}.
\]
Substitute $t_0 = \ell \sqrt{\frac{d+1}{2d}}$:
\[
3 \cdot \frac{(d+1)\ell^2}{2d} \geq \frac{(d-1)\ell^2}{2d} \implies 3(d+1) \geq d-1.
\]
This holds for all $d \geq 2$.

Symmetrically we can show $\norm{u - v}_2 \geq \norm{v - X_1}_2$. Thus $(u, X_1, v)$ is satisfied.
\end{proof}

\textbf{Special Case $d=1$.} We note that in the case of $d=1$, this construction degenerates into only one special vertex $X_1$, with only edge constraints. We claim that this construction is still correct, as the vertices on different sides of $X_1$ encode a cut. Since ``no local moves can improve the objective function'' is a stronger condition than ``no local moves of vertices except $X_1$ can improve the objective function'', any local optimum of $\calI_{btw}$ is also a local max cut.

This concludes the proof of \cref{thm:local-betweenness-dd}.
\end{proof}

% \begin{remark}
% We note that in the case of $d=1$, this construction degenerates into only one special vertex $X_1$, with only edge constraints. We claim that this construction is still correct but the proof does not completely follow the above.
% \end{remark}

\begin{proof}[Proof of \Cref{thm:local-contrastive-dd}]
We extend the reduction for \textsc{LocalContrastive-Euclidean} to higher dimensions by modifying the \textsc{LocalBetweenness-Euclidean} construction.

Recall that in the \textsc{LocalBetweenness-Euclidean} construction, all $v \in V$ are forced onto two opposing rays. We can encode the same isosceles and equilateral gadgets using contrastive triplets, for example, replacing a betweenness triplet $(x,y,z)$ with two contrastive triplets $(x,y^+,z^-)$ and $(z,y^+,x^-)$. We then use the same idea for the $d=1$ case we considered in \cref{sec:hardness-1d}, to turn the rays into segments. Let $Y$ be a new special vertex forming a regular simplex with $X_1, \dots, X_d$, and $Z$ be a new special vertex on one of the rays. We add a constraint $(Y, Z^+, X_1^-)$ to ensure that $\norm{X_1 - Y}_2 \geq \norm{Y - Z}_2$, so $YZ$ is the segment we ``cut'' on the ray.

For each $v \in V$, we add two constraints $(X_1, Y^+, v^-)$ and $(X_1, v^+, Z^-)$ to ensure that $v$ is on $YZ$, or on its mirror $Y'Z'$ with respect to the hyperplane spanned by $X_1,\dots,X_d$.

We can always choose a hierarchy of weights so that these newly added constraints are satisfied in any local optimum. The rest of the proof follows similarly to \cref{thm:local-betweenness-dd}.
\end{proof}
\section{Omitted proofs for Section~\ref{sec:hardness-tree}} \label{sec:proof_hardness_tree}
\begin{proof}[Proof of \Cref{thm:local-contrastive-tree}]
% Since the potential function for any configuration $a_1, \dots, a_n, x, x', y, z$ can be written as
% \begin{align}
%     W(a_1, \dots, a_n):= \sum_{k=1}^m w_k \textbf{1}_{t_k \textrm{ is satisfied}},
% \end{align}
% we conclude that local $\Triplets \in \PLS$.

% \yl{refine the comparison of weights, instead of saying ``... dominates.'' check if we need hierarchy}

We reduce from \textsc{LocalMaxCut}. For a given graph $G(V, E, w)$, we construct the triplets over vertices $V \cup \{X, X', Y, Z\}$. We denote $W := \sum_{(u,v) \in E} w_{uv}$ to be the sum of edge weights. The triplets are constructed as

\textbf{Type A triplets}: $XX'|Y$, $XX'|Z$, and $XX'|v$ for all vertex $v \in V$, each with weight $W$;
\item \textbf{Type B triplets}: $XY|Z$ with a large weight $nW + n + 1$;
\item \textbf{Type C triplets}: $Yv|X$ and $Zv|X$ for all $v \in V$, each with weight $W + 1;$
\item \textbf{Type D triplets}: For every edge $(u, v) \in E$, we add constraints $uX|v$ and $vX|u$, each with weight $w_{uv}/2$.

Assume that $T$ is a local solution of the above triplets instance. Since $T$ is an rooted binary tree, there exists a unique path $P$ starting from the leave node $X$ to the root. All other leave nodes $V \cup \{X', Y, Z\}$ would be on branches of this path $P$. Firstly observe that $X'$ should share the same parent node with $X$. In other words, $X'$, and only $X'$, should be on the lowest branch on the path $P$. If this is not the case, $X'$ can simply move to a location such that $X$ and $X'$ share the same parent and this would only increase the satisfied triplets by satisfying all triplets of Type A.

Since in any local solution $T$, Type A triplets are always satisfied as argued above. For nodes $Y$ and $Z$, Type B triplet $XY|Z$ has dominating weight (the weight of $XY|Z$ is greater than the sum of weights of other triplets that $Y$ or $Z$ is involved). Formally, we have
\begin{align*}
    w_{XY|Z} > \sum_{v \in V} w_{Yv|X} \quad \text{and} \quad w_{XY|Z} > \sum_{v \in V} w_{Zv|X}.
\end{align*}
Thus any local solution $T$ needs to satisfy Type B triplet. We note that the triplet $XY|Z$ can only be satisfied when $Y$ is on a lower branch of $Z$ on the path $P$. 

Similarly for each vertex $v \in V$, it holds that 
\begin{align*}
    w_{Yv|X} > \sum_{u \in \mathcal{N}(v)} \left(w_{uX|v} + w_{vX|u}\right)
\end{align*}
where $\mathcal{N}(v)$ denotes the set of neighbors of $v$. Thus in any local solution $T$, vertex $v$ would try to satisfy $Yv|X$ and $Zv|X$ first. Since $Y$ and $Z$ are on different branch, only one of $Yv|X$ and $Zv|X$ could be satisfied in a local solution $T$. Notice that in order to satisfy $Yv|X$ or $Zv|X$, vertex $v$ should be on the branch of $Y$ or the branch of $Z$ respectively. We conclude that in a local solution $T$, any $v \in V$ should be on either the branch with $Y$ or the branch with $Z$.

We proceed to show that for any pair of vertices $u$ and $v$, if $u$ and $v$ are on the same branch (either the branch with $Y$ or the branch with $Z$), then neither of $vX|u$ or $uX|v$ would be satisfied. If $u$ and $v$ are on different branches, e.g., $v$ is on the branch of $Y$ and $u$ is on the branch of $Z$, then exactly one of $vX|u$ or $uX|v$ would be satisfied. Since any vertex $v \in V$ can only reside in either the branch with $Y$ or the branch with $Z$, we conclude that in any local solution $T$, the configuration of all vertices $v$ will introduce a solution of $\textsc{LocalMaxCut}$ on the original graph. That is, all vertices $v \in V$ on the branch with $Y$ form one side of the cut and those on the branch with $Z$ form the other side.
\end{proof}
\section{Hardness of Non-Betweeness embeddings}\label{app:extensions}

We start this section by giving formal definition of \textsc{LocalNonBetweenness-Euclidean} problem.

\begin{nproblem}[\nbtw]
   \textsc{Input :} Set $V$  with \textit{non-betweenness} triplets $\{(\anc_i, \pos_i, \neg_{i})\}_{i = 1}^m$ each with a non-negative weight $w_i \ge 0$.
\\
  \noindent \textsc{Output :}  An embedding $f: V \to \mathbb{R}^d$ such that no vertex $v$ can increase the value of the embedding by switching its location in $\mathbb{R}^d$. We say a constraint $(x_i, y_i, z_i)$ is \emph{satisfied} by $f(\cdot)$, if $x_i$ and $z_i$ are not placed the farthest apart (equivalently, $y_i$ is not ``between'' $x_i$ and $z_i$), i.e., $\norm{f(x_i) - f(z_i)}_2 \leq \max\{\norm{f(x_i) - f(y_i)}_2, \norm{f(z_i) - f(y_i)}_2\}$. The embedding's objective value is $\sum_{i=1}^m w_i \cdot \mathbf{1}_{(x_i, y_i, z_i)}$.
\end{nproblem}

The problem~\textsc{LocalNonBetweenness-Euclidean} problem, even in the case of $1$-dimensional embeddings ($d = 1$), it is the well-studied problem called Non-Betweenness~\citep{guruswami2008beating,charikar2009every,austrin2015np}. Here we show that it is $\PLS$-hard.

%\vnote{put some references for motivation}

\begin{theorem}\label{thm:non-betweenness-pls}
  $\nbtw$ for embedding dimension $d = 1$ is $\PLS$-hard.
\end{theorem}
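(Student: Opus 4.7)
The plan is to give a $\PLS$-reduction from \textsc{LocalMaxCut} to $\nbtw$ in dimension $d=1$, mirroring the design of the earlier 1D reductions (\cref{thm:local-contrastive-1d,thm:local-betweenness-dd}). Given a weighted graph $G=(V,E,w)$ with total edge weight $W\coloneqq\sum_e w_e$ and heavy weight $M\coloneqq W+1$, I will introduce two pivot vertices $P_1,P_2$ and add, on the point set $V\cup\{P_1,P_2\}$: (i) \emph{boundary constraints} $(P_1,v,P_2)$ of weight $M$ for every $v\in V$, whose purpose is to force each $v$ outside the segment $[\min(P_1,P_2),\max(P_1,P_2)]$; and (ii) for every edge $(u,v)\in E$, two non-betweenness triples $(P_1,u,v)$ and $(P_1,v,u)$, each of weight $w_{uv}$. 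From a local optimum of the resulting $\nbtw$ instance, the cut extractor $B$ will simply return $S\coloneqq\{v\in V : v<\min(P_1,P_2)\}$ and $\bar S\coloneqq V\setminus S$.

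The identity driving the correspondence is that among any three distinct points on the real line exactly one is the \emph{middle}. For the triple $\{P_1,u,v\}$, the constraint $(P_1,u,v)$ is satisfied iff $u$ is not the middle, and symmetrically for $(P_1,v,u)$. Therefore: when $P_1$ is the middle---equivalently, $u,v$ are on opposite sides of $P_1$---\emph{both} edge constraints are satisfied, contributing $2w_{uv}$; otherwise one of $u,v$ is the middle and exactly one is satisfied, contributing $w_{uv}$. Summing over $E$ yields total edge contribution $W+\mathrm{cut}(S,\bar S)$, and crucially the per-edge contribution depends only on which side of the pivots $u,v$ lie on, not on their exact positions. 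Consequently intra-side moves are objective-neutral, while flipping a single vertex across the pivots changes the objective by exactly $(\text{same-side edge weight})-(\text{opposite-side edge weight})$, which is precisely the local-max-cut improvement condition. Thus an improving vertex move in the $\nbtw$ instance exhibits an improving single-vertex flip for \textsc{LocalMaxCut}.

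The hard part will be ruling out spurious improving moves by the two pivots themselves. I will first argue that in any local optimum every boundary constraint must be satisfied: if some $v_0$ were strictly between $P_1,P_2$, moving it to the far exterior of both pivots satisfies this heavy constraint of weight $M$ at an edge-term cost of at most $W<M$, contradicting local optimality. Given that, any displacement of $P_i$ that crosses some vertex $v_0$ would push $v_0$ inside the pivot segment and violate a heavy boundary constraint, whereas any displacement of $P_i$ that crosses no vertex---including swapping the roles of $P_1$ and $P_2$ across a vertex-free gap between the two sides---leaves the middle of every triple $\{P_1,u,v\}$ unchanged and hence the objective unchanged. Non-betweenness triples are reflection-symmetric, so the orientation of the two sides of the extracted cut is not determined by the instance; but this is harmless, since a local max cut is itself defined only up to swapping sides. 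Together these pieces will yield the desired $\PLS$-reduction.
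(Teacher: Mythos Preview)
Your proposal is correct and follows essentially the same reduction as the paper: two pivot points together with one non-betweenness constraint per vertex to keep all $v\in V$ outside the pivot segment, plus the pair of ``$u$ not middle'' / ``$v$ not middle'' constraints on $\{P_1,u,v\}$ for each edge $(u,v)$, yielding the identity ``objective $=$ constant $+$ cut value.'' The only cosmetic differences are that the paper uses boundary weight $W$ rather than $W+1$ (and establishes the boundary structure by moving the second pivot instead of the offending vertex) and writes the edge triples as $(u,v,X),(v,u,X)$ rather than $(P_1,u,v),(P_1,v,u)$---which encode the same predicates; your treatment of pivot moves is in fact more explicit than the paper's.
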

\begin{proof}

 We give a polynomial time reduction from \textsc{LocalMaxCut} to \textsc{LocalNonBetweenness-Euclidean} with $d = 1$. Given an undirected graph $G = (V, E)$ with edge-weights $w\ge0$. Denote $W\coloneq \sum_{(u,v) \in E} w_{uv}$ to be the sum of all edge weights. We introduce two special vertices $X$ and $Y$ and the input to the \textsc{LocalNonBetweenness-Euclidean} will be vertices from $V \cup \{X, Y\}$ with $|V| + 2|E|$ non-betweenness triplets. The triplets can be classified into two types

\textbf{Type A triplets}. For every vertex $v \in V,$ we add triplet $(X, v, Y)$ with weight $W$.

\textbf{Type B triplets}. For every edge $(u, v) \in E,$ we add triplets $(u, v, X)$ and $(v, u, X)$ with weight $w_{uv}.$

We now argue that in any local optimal embedding over $\mathbb{R}$, there must be no point between $X$ and $Y$. To see this, given an embedding where vertex $v$ is in between $X$ and $Y$, the corresponding triplet $(X, v, Y)$ is not satisfied. By moving $Y$ to the location such that there is no point between $X$ and $Y$ ($X$ and $Y$ are next to each other), triplet $(X, v, Y)$ becomes satisfied and all previous satisfied triplets remain unchanged. Thus one can simply change the location of $Y$ and increase the sum of weighted satisfied triplets.

We proceed to analyze Type B triplets. For any edge $(u, v) \in E$, we have the following three cases.

\textbf{Case 1} Vertices $u$ and $v$ are to the left of $X$, that is, $u$ and $v$ are both smaller than $X$. Without loss of generality, we assume that $u < v$. Notice that in this case $v$ is in between $u$ and $X$. Thus triplet $(v, u, X)$ is satisfied but $(u, v, X)$ is not satisfied.

\textbf{Case 2} Both $u$ and $v$ are greater than $X$. Assume that $u < v$, it follows that triplet $(u, v, X)$ is satisfied but $(v, u, X)$ is not satisfied.

\textbf{Case 3} Vertices $u$ and $v$ are on different side of $X$. In this case it holds that
\begin{align*}
    |u - X| \leq |u - v|, \quad \text{and} \quad |v - X| \leq |u - v|. 
\end{align*}
Thus both triplets $(u, v, X)$ and $(v, u ,X)$ are satisfied.

We conclude that for any edge $(u, v) \in E$, if vertices $u$ and $v$ are on the same side of $X$ (either to the left or to the right), then one of triplets $(u, v, X)$ or $(v, u ,X)$ would be satisfied. If $u$ and $v$ are on different side of $X$, both $(u, v, X)$ and $(v, u ,X)$ would be satisfied. Thus type B triplets essentially induce a cut over the original graph $G$--any vertex $v$ to the left of $X$ is on one side of the cut and vertices to the right of $X$ is on the other side of the cut. Any local optimal embedding over the above \textsc{LocalNonBetweenness-Euclidean} instance with total weight $W'$ would induce a \textsc{LocalMaxCut} over $G$ with weight $W' - (|V| + 1)W.$
\end{proof}

% (Sketch) We again reduce from local Max-Cut.

% Let $G=(V,E,w)$ be a weighted undirected graph on vertices $V=\{v_1,\dots,v_n\}$.

% We define an instance of local Non-Betweenness with $n+2$ points $a_1,...,a_n,\textbf{x},\textbf{y}$ in which $a_i$ represents the vertex $v_i$, and with the following triplets:

% \begin{itemize}
%   \item Type 1 triplets $(\textbf{x},\textbf{y},a_i)$ for all $i \in [n]$ with weight 1.
%   \item Type 2 triplets $(a_i,\textbf{x},a_j)$ and $(a_j,\textbf{x},a_i)$ for all edges $(v_i,v_j)\in E$ with weight $w_{(v_i,v_j)}$.
% \end{itemize}

% By the Type 1 constraints, every local solution has all the $a_i$'s either to the left of both $x,y$ or to the right of both $x,y$.

% One can easily check that:

% \begin{itemize}
%   \item At least one of $(a_i,\textbf{x},a_j)$ or $(a_j,\textbf{x},a_i)$ is satisfied.
%   \item If both triplets are satisfied, then $\{v_i,v_j\}$ is \emph{cut}.
%   \item If exactly one triplet is satisfied, then $\{v_i,v_j\}$ is \emph{uncut}.
% \end{itemize}

% \yl{todo}
%\newpage
\section{Omitted proofs for Section~\ref{sec:hardness-cls}}\label{sec:omitted_proof_contrastive}

% In the proof $\Contrastive$ problem we will introduce two \textit{pivot} points, namely $0$ and $1/2$ that remain unchanged regardless of the constraints. We note that without pivot points, $\Contrastive$ problem has a trivial solution, where all points $a_1 = a_2 = \cdots = a_n$ are placed at the same location. If we only have one \textit{pivot} point, we can simply set all points $a_1 = \dots = a_n$ to be equal to the pivot point and this will still result in a KKT point. Thus we deem introducing two \textit{pivot} point $0$ and $1/2$ is necessary to make the problem non-trivial. We are now ready to present the proof.

    \begin{proof}[Proof of \cref{thm:CLS_hard_contrastive}]
    We first study the case where the embedding dimension $d = 1$ with pivot points $A = 0$ and $B = 1/2$. In order to show the hardness result, we reduce \textsc{QuadraticProgram-KKT} problem to the local solution of $\Contrastive$. The former has been shown to be $\CLS$-complete \citep{fearnley2024complexity}. Formally, we consider the following optimization problem
    \begin{align} \label{eq:quadratic_program}
        \min_{\vx \in [0, 1]^n} \vx^\top \mat{Q} \vx + \vb^\top \vx.
    \end{align}
To show that $\QuadraticKKT$ reduces to $\Contrastive$, we consider an arbitrary quadratic function of $3$ variables $x, y, z \in [0, 1]$
\begin{align} \label{eq:quadratic_poly}
    q(x, y, z) \defeq c_1 x^2 + c_2 y^2 + c_3 z^2 + c_4 xy + c_5 xz + c_6 yz + c_7 x + c_8 y + c_9 z. 
\end{align}
For a $\Contrastive$ problem with points from $[0, 1]$, we set the margin $\alpha = 1$. Notice that for any triplet of points $(a_i, a_j, a_k) \in [0, 1]^3, $ we have 
\begin{align*}
    (a_i - a_j)^2 - (a_i - a_k)^2 + 1 \geq 0.
\end{align*}
Thus for any triplet constraint $(a_i, a_j, a_k)$ with weight $w$, the loss function is  
\begin{align*}
    \mathcal{L} = w \left((a_i - a_j)^2 - (a_i - a_k)^2 + 1\right).
\end{align*}
For any triplet $(a_i, a_j, a_k),$ we denote triplet $(a_i, a_k, a_j)$ as its dual. Notice if $(a_i, a_j, a_k)$ has weight $w$ and the dual triplet $(a_i, a_k, a_j)$ has weight $w'$, we have the following objective function
\begin{align*}
    \mathcal{L} = (w - w')\left((a_i - a_j)^2 - (a_i - a_k)^2\right) + w + w'.
\end{align*}

Now consider a $\Contrastive$ instance on $(x, y, z) \in [0, 1]^3$. Triplets with weights $w_i$ are given as

\begin{enumerate}[label = ]
    % \item $w_1 (x, y, z) = w_1 \left((x - y)^2 - (x - z)^2\right);$
    % %\item $w_2 (x, z, y) = w_2 \left((x - z)^2 - (x - y)^2\right);$
    % \item $w_2 (y, x, z) = w_2 \left((y - x)^2 - (y - z)^2\right);$
    % %\item $w_4 (y, z, x) = w_4 \left((y - z)^2 - (y - x)^2\right);$
    % \item $w_3 (z, x, y) = w_3 \left((z - x)^2 - (z - y)^2\right);$
    % %\item $w_6 (z, y, x) = w_6 \left((z - y)^2 - (z - x)^2\right);$
    \item Triplet $t_1 = (x, 0, y)$ with weight $w_1$: $\mathcal{L}_1 = w_1 \left((x - 0)^2 - (x - y)^2 + 1\right);$
    \item triplet $t_2 = (y, 0, x)$ with weight $w_2$: $\mathcal{L}_2 = w_2 \left((y - 0)^2 - (y - x)^2 + 1\right);$
    \item triplet $t_3 = (x, 0, z)$ with weight $w_3$: $\mathcal{L}_3 = w_3 \left((x - 0)^2 - (x - z)^2 + 1\right);$
    \item triplet $t_4 = (z, 0, x)$ with weight $w_4$: $\mathcal{L}_4 = w_{4} \left((z - 0)^2 - (z - x)^2 + 1\right);$
    \item triplet $t_5 = (y, 0, z)$ with weight $w_5$: $\mathcal{L}_5 = w_{5} \left((y - 0)^2 - (y - z)^2 + 1\right);$
    \item triplet $t_6 = (z, 0, y)$ with weight $w_6$: $\mathcal{L}_6 = w_{6} \left((z - 0)^2 - (z - y)^2 + 1\right);$
    \item triplet $t_7 = (x, 0, \frac{1}{2})$ with weight $w_7$: $\mathcal{L}_7 = w_{7} \left(x^2 - (x - \frac{1}{2})^2 + 1\right);$
    \item triplet $t_8 = (y, 0, \frac{1}{2})$ with weight $w_8$: $\mathcal{L}_8 = w_{8} \left(y^2 - (y - \frac{1}{2})^2 + 1\right);$
    \item triplet $t_9 = (z, 0, \frac{1}{2})$ with weight $w_9$: $\mathcal{L}_9 = w_{9} \left(z^2 - (z - \frac{1}{2})^2 + 1\right);$
    \item triplet $t_{10} = (0, x, \frac{1}{2})$ with weight $w_{10}$: $\mathcal{L}_{10} = w_{10} \left((0 - x)^2 - \frac{1}{4} + 1\right);$
    \item triplet $t_{11} = (0, y, \frac{1}{2})$ with weight $w_{11}$: $\mathcal{L}_{11} = w_{11} \left((0 - y)^2 - \frac{1}{4} + 1\right);$
    \item triplet $t_{12} = (0, z, \frac{1}{2})$ with weight $w_{12}$: $\mathcal{L}_{12} = w_{12} \left((0 - z)^2 - \frac{1}{4} + 1\right).$
    
    % \item $w_2 (y, 0, x) = w_2 \left((y - 0)^2 - (y - x)^2 + 1\right);$
    % \item $w_{4} (z, 0, x) = w_{4} \left((z - 0)^2 - (z - x)^2 + 1\right);$
    % \item $w_{6} (z, 0, y) = w_{6} \left((z - 0)^2 - (z - y)^2 + 1\right);$
    % \item $w_{8}(y, 0, \frac{1}{2}) = w_{8} \left(y^2 - (y - \frac{1}{2})^2 + 1\right);$
    % \item $w_{10}(0, x, \frac{1}{2}) = w_{10} \left((0 - x)^2 - \frac{1}{4} + 1\right);$
    % \item $w_{12}(0, z, \frac{1}{2}) = w_{12} \left((0 - z)^2 - \frac{1}{4} + 1\right).$
    % \item $w_{16}(x, 1, 0) = w_{16} \left((x - 1)^2 - x^2\right);$
    % \item $w_{17}(y, 1, 0) = w_{17} \left((y - 1)^2 - y^2\right);$
    % \item $w_{18}(z, 1, 0) = w_{18} \left((z - 1)^2 - z^2\right);$
\end{enumerate}

The duals of above constraints are defined similarly with weight $w_i'$. Summing up all the objectives we get
\begin{align*}
    % & (w_1 - w_1') \left(y^2 - 2xy + 2xz - z^2\right) + (w_2 - w_2') \left(x^2 - 2xy + 2yz - z^2\right) \\
    % + & (w_3 - w_3') \left(x^2 - 2xz + 2yz - y^2\right) + 
    \mathcal{L} & = (w_1 - w_1') \left(2xy - y^2\right) + (w_2 - w_2') \left(2xy - x^2\right) +  (w_3 - w_3') \left(2xz - z^2\right) \\
    & + (w_4 - w_4') \left(2xz - x^2\right) + (w_5 - w_5') \left(2yz - z^2\right) + (w_6 - w_6') \left(2yz - y^2\right)\\ 
    & +  (w_{7} - w_{7}') x + (w_{8} - w_{8}') y + (w_{9} - w_{9}') z  \\
    & +  (w_{10} - w_{10}') x^2 + (w_{11} - w_{11}') y^2 +   (w_{12} - w_{12}') z^2 + C.
\end{align*}
We note that the last constant term $C$ wouldn't change the first-order stationary point of the above program. By setting

\begin{enumerate}[label = ]
    \item $w_1 - w_1' = - c_2;$
    \item $w_2 - w_2' = c_2 + \frac{c_4}{2};$
    \item $w_3 - w_3' = -c_3 - \frac{c_6}{2};$
    \item $w_4 - w_4' = c_3 + \frac{c_5}{2} + \frac{c_6}{2};$
    \item $w_5 - w_5' = \frac{c_6}{2};$
    \item $w_6 - w_6' = 0;$
    \item $w_7 - w_7' = c_7;$
    \item $w_8 - w_8' = c_8;$
    \item $w_9 - w_9' = c_9;$
    \item $w_{10} - w_{10}' = c_1 + c_2 + c_3 + \frac{c_4}{2} + \frac{c_5}{2} + \frac{c_6}{2};$
    \item $w_{11} - w_{11}' = 0;$
    \item $w_{12} - w_{12}' = 0,$
    % \item $w_2 - w_2' = c_2 + \frac{c_4}{2}$ ;
    % \item $w_4 - w_4' = c_3 + \frac{c_5}{2} + \frac{c_6}{2}$;
    % \item $w_6 - w_6' = 0$ ;
    % \item $w_8 - w_8' = c_8$ ;
    % \item $w_{10} - w_{10}' = c_1 + c_2 + c_3 + \frac{c_4}{2} + \frac{c_5}{2} + \frac{c_6}{2}$;
    % \item $w_{12} - w_{12}' = 0$ ,
\end{enumerate}
we have $\mathcal{L} = q(x, y, z).$ This means that from any first-order stationary point $(x^*, y^*, z^*)$ of the objective $\mathcal{L},$ we have $[x^*, y^*, z^*]^\top \in [0, 1]^3$ is a KKT for the quadratic program defined in \eqref{eq:quadratic_poly}. 

As shown above, $\mathcal{L}$ has the power to represent any quadratic polynomials $q(x, y, z)$. For the general form as in \eqref{eq:quadratic_program}, one can group variables into groups of three $(v_i, v_j ,v_k)$ and repeat the construction shown above. Since at most we have $O(n^2)$ interacting terms $v_iv_j$ where $v_i$ and $v_j$ are different variables, it requires $O(n^2)$ triplets to represent the quadratic program in \eqref{eq:quadratic_program}. From the $\CLS$-hardness of \textsc{QuadraticProgram-KKT}, we conclude that $\Contrastive$ with embedding dimension $d = 1$ is $\CLS$-hard.

    To extend above result to higher dimensions, we construct a reduction similar to the one-dimension case. Since $\norm{\va_{i} - \va_{j}}_2^2 = \norm{\va_i}_2^2 + \norm{\va_j}_2^2 - 2 \va_i^\top \va_j,$ by using the same constraints and coefficients as the one-dimensional case and setting $\alpha = d$, we can construct a quadratic program of the form
    \begin{align}
        \min_{\vx \in [0, 1]^{dn}} \vx^\top \mat{Q} \vx + \vb^\top \vx. \label{eq:high_dim_quadratic}
    \end{align}
    Where $\mat{Q}$ is a $dn \times dn$ matrix with $n \times n$ blocks, the $(i, j)^{th}$ block is of form $c_{ij} \cdot \mat{I}$. $\vb \in \mathbb{R}^{dn}$ is a concatenation of $n$ vectors $(\vv_1, \dots, \vv_n),$ where each $\vv_i \in \mathbb{R}^d$ is of form $d_i \cdot \bm{1}$ . Now for any first-order stationary point $\va^* = (\va_1^*, \va_2^*, \dots, \va_n^*)\in [0, 1]^{dn}$, we consider the first coordinate for each $a_{i, 1}^* = \va_i^* \cdot \ve_1$, one can verify that $\va_1^* = (a^*_{1, 1}, \dots, a^*_{n, 1}) \in [0, 1]^d$ forms a KKT point of the following program
    \begin{align*}
        \min_{\vx \in [0, 1]^n} \vx^\top \mat{C} \vx + \vd^\top \vx.
    \end{align*}
    Where $\mat{C} \in \mathbb{R}^{n \times n}$ and $c_{ij}$ is the coefficient for the $(i, j)^{th}$ block of $\mat{Q}$ defined in \eqref{eq:high_dim_quadratic} and $d_i$ is the coefficient of the $i^{th}$ vector of $\vb$ in \eqref{eq:high_dim_quadratic}. From the $\CLS$-hardness of \textsc{QuadraticProgram-KKT}, we conclude that $\Contrastive$ is $\CLS$-hard.
\end{proof}

\section{Detailed experimental statistics}\label{app:experiment-table}

% \yl{my preference is to put the table to Appendix, no one will check the numbers actually; I can also include $H_6$...$H_{14}$ for completeness.}\vnote{sounds good, feel free to move it there. }

% In this section, we include the detailed statistics of the instances used in \cref{sec:bad-examples}.

\begin{table}[h]
\centering
\caption{Experimental statistics for the hard instances $H_1$ to $H_{15}$ used in \cref{sec:bad-examples}. The second and the third columns show the number of vertices and edges in the original \textsc{LocalMaxCut} instance. The next 6 columns show the number of vertices and constraints in the reduced \textsc{LocalBetweenness-Euclidean-1D}, \textsc{LocalContrastive-Euclidean-1D} and \textsc{LocalContrastive-Tree} instances. The last column shows the number of iterations needed to reach a local optimum.}
\label{tab:hn-comparison}
\scalebox{0.9}{
\begin{tabular}{cccccccccc}
\toprule
\multirow{2}{*}{Instance}
& \multicolumn{2}{c}{MaxCut}
& \multicolumn{2}{c}{BTW-1D}
& \multicolumn{2}{c}{CTR-1D}
& \multicolumn{2}{c}{CTR-Tree}
& \multirow{2}{*}{\#iterations} \\
\cmidrule(r){2-3} \cmidrule(r){4-5} \cmidrule(r){6-7} \cmidrule(r){8-9}
& $n$ & $m$
& $n$ & $m$
& $n$ & $m$
& $n$ & $m$
&  \\
\midrule
$H_1$ & 37 &  45  & 39  & 119 & 40  & 121 & 41  & 204  & 63   \\
$H_2$ & 65 &  81  & 67  & 211 & 68  & 213 & 69  & 360  & 167  \\
$H_3$ & 93 &  117 & 95  & 303 & 96  & 305 & 97  & 516  & 375  \\
$H_4$ & 121 & 153 & 123 & 395 & 124 & 397 & 125 & 672  & 791  \\
$H_5$ & 149 & 189 & 151 & 487 & 152 & 489 & 153 & 828  & 1623 \\
$H_6$ & 177 & 225 & 179 & 579 & 180 & 581 & 181 & 984  & 3287 \\
$H_7$ & 205 & 261 & 207 & 671 & 208 & 673 & 209 & 1140 & 6615 \\
$H_8$ & 233 & 297 & 235 & 763 & 236 & 765 & 237 & 1296 & 13,271 \\
$H_9$ & 261 & 333 & 263 & 855 & 264 & 857 & 265 & 1452 & 26,583 \\
$H_{10}$ & 289 & 369 & 291 & 947 & 292 & 949 & 293 & 1608 & 53,207 \\
$H_{11}$ & 317 & 405 & 319 & 1039 & 320 & 1041 & 321 & 1764 & 106,455 \\
$H_{12}$ & 345 & 441 & 347 & 1131 & 348 & 1133 & 349 & 1920 & 212,951 \\
$H_{13}$ & 373 & 477 & 375 & 1223 & 376 & 1225 & 377 & 2076 & 425,943 \\
$H_{14}$ & 401 & 513 & 403 & 1315 & 404 & 1317 & 405 & 2232 & 851,927 \\
$H_{15}$ & 429 & 549 & 431 & 1407 & 432 & 1409 & 433 & 2388 & 1,703,895  \\
\bottomrule
\end{tabular}
}
\end{table}

\newpage

\end{document}